\theoremstyle{definition}
\theoremstyle{definition}
\newtheorem{definition}{Definition}[section]
\newtheorem{proposition}{Proposition}
\newtheorem{lemma}{Lemma}[section]
\newtheorem{property}{Property}
\definecolor{lightblue}{rgb}{0.93,0.95,1.0}
\newcommandx{\cmtn}[2][1=]{\todo[linecolor=blue,backgroundcolor=blue!10,bordercolor=blue,#1]{MN: #2}\xspace}
\newcommandx{\chtx}[2][1=]{\todo[linecolor=blue,backgroundcolor=blue!10,bordercolor=green,#1]{HTX: #2}\xspace}
\def\eqref#1{equation~\ref{#1}}
\def\1{\bm{1}}
\def\vp{{\bm{p}}}
\DeclareMathAlphabet{\mathsfit}{\encodingdefault}{\sfdefault}{m}{sl}
\SetMathAlphabet{\mathsfit}{bold}{\encodingdefault}{\sfdefault}{bx}{n}
\title{A Systematic Characterization of Sampling Algorithms\\ for Open-ended Language Generation}
\author{Moin Nadeem\thanks{~~Equal contribution.}  \\
Massachusetts Institute of Technology \\
\texttt{mnadeem@mit.edu}
\And 
Tianxing He\footnotemark[1] \\
Massachusetts Institute of Technology \\
\texttt{cloudygoose@csail.mit.edu}
\AND
Kyunghyun Cho \\
New York University \\
\texttt{kyunghyun.cho@nyu.edu} \\
\And 
James Glass \\
Massachusetts Institute of Technology \\
\texttt{glass@mit.edu}
}
\date{}
\begin{document}
\maketitle
\begin{abstract}
This work studies the widely adopted ancestral sampling algorithms for auto-regressive language models, which is not widely studied in the literature. We use the quality-diversity (Q-D) trade-off to investigate three popular sampling algorithms (top-$k$, nucleus and tempered sampling). We focus on the task of open-ended language generation. We first show that the existing sampling algorithms have similar performance. 
After carefully inspecting the transformations defined by different sampling algorithms, we identify three key properties that are shared among them: \textit{entropy reduction}, \textit{order preservation}, and \textit{slope preservation}. To validate the importance of the identified properties, we design two sets of new sampling algorithms: one set in which each algorithm satisfies all three properties, and one set in which each algorithm violates at least one of the properties. We compare their performance with existing sampling algorithms, and find that violating the identified properties could lead to drastic performance degradation, as measured by the Q-D trade-off. On the other hand, we find that the set of sampling algorithms that satisfies these properties performs on par with the existing sampling algorithms.\footnote{Our data and code are available at \url{https://github.com/moinnadeem/characterizing-sampling-algorithms}.}
\end{abstract}

\section{Introduction}
\label{sec:intro}

A language model (LM) is a central module for natural language generation (NLG) tasks \citep{trends-nlp} such as machine translation \citep{wu17ganmt}, dialogue response generation \citep{dialogue17jiwei}, image captioning \citep{coco14tsung}, and related tasks. Given a trained LM, finding the best way to generate a sample from it has been an important challenge for NLG applications.

Decoding, i.e., finding the most probable output sequence from a trained model, is a natural principle for generation. The beam-search decoding algorithm approximately finds the most likely sequence by performing breadth-first search over a restricted search space. It has achieved success in machine translation, summarization, image captioning, and other subfields. 

However, in the task of open-ended language generation (which is the focus of this work), a significant degree of \textit{diversity} is required. For example, conditioned on the prompt ``\texttt{The news says that ...}'', the LM is expected to be able to generate a wide range of interesting continuations. While the deterministic behavior of decoding algorithms could give high-quality samples, they suffer from a serious lack of diversity.

\begin{figure}
    \centering
    \includegraphics[width=\columnwidth]{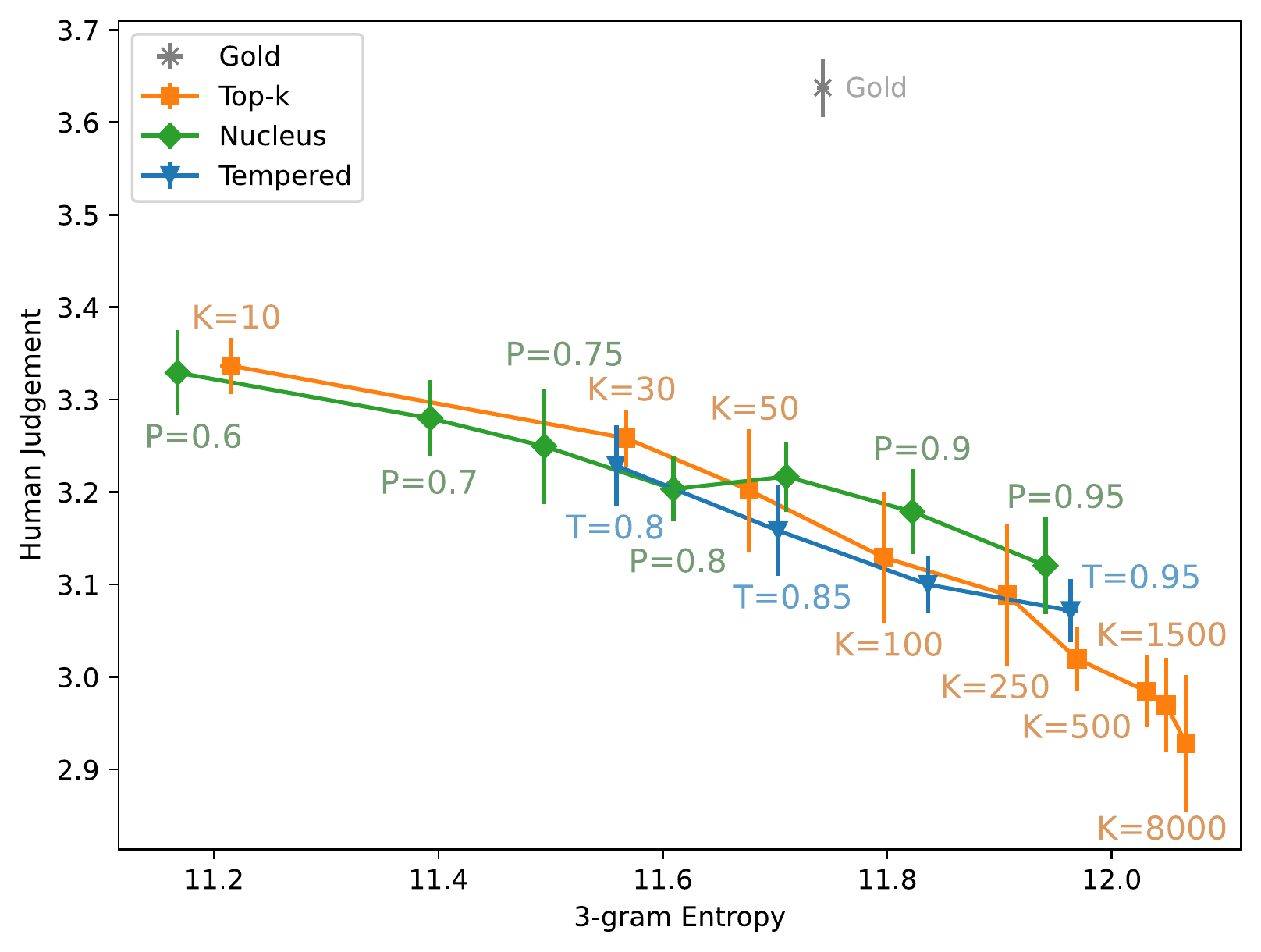}
    \caption{Human evaluation (y-axis: quality, x-axis: diversity, both are the bigger the better) shows that the generation performance of existing sampling algorithms are on par with each other.}
    \label{fig:giga_humaneval_existing}
\end{figure}

This need for diversity gives rise to a wide adoption of various sampling algorithms. Notably, top-$k$ sampling \citep{fan-etal-2018-hierarchical}, nucleus sampling \citep{Holtzman2020The}, and tempered sampling \citep{shortgan18massimo} have been used in open-ended generation \citep{radford18gpt2, shortgan18massimo}, story generation \citep{fan-etal-2018-hierarchical}, and dialogue response generation \citep{zhang2019dialogpt}.  However, the sampling algorithm and the hyperparameter are usually chosen via heuristics, and a comprehensive comparison between existing sampling algorithm is lacking in the literature. More importantly, \textbf{the underlying reasons behind the success of the existing sampling algorithms still remains poorly understood}.

In this work, we begin by using the quality-diversity (Q-D) trade-off \citep{shortgan18massimo} to compare the three existing sampling algorithms. For automatic metrics, we use the BLEU score for quality and n-gram entropy for diversity. We also correlate these automatic metrics with human judgements. The first observation we draw is that top-$k$ , nucleus and tempered sampling perform on par in the Q-D trade-off, as shown in Figure \ref{fig:giga_humaneval_existing}.
Motivated by this result, we extract three key properties by inspecting the transformations defined by the sampling algorithms: (1) \textit{entropy reduction}, (2) \textit{order preservation} and (3) \textit{slope preservation}. We prove all three properties hold for the three existing sampling algorithms. 

We then set out to systematically validate the importance of the identified properties. To do so, we design two sets of new sampling algorithms in which each algorithm either violates one of the identified properties, or satisfies all properties. Using the Q-D trade-off, we compare their efficacy against existing algorithms, and find that violating these identified properties could result in significant performance degradation. More interestingly, we find that the set of sampling algorithms that satisfies these properties has generation performance that matches the performance of existing sampling algorithms.

\section{Sampling Algorithms for \\ Autoregressive Language Models}
\label{sec:sampling_big}

\subsection{Autoregressive Language Modeling}
The task of autoregressive language modeling is to learn the probability distribution of the $(l+1)$-th word $W_{l+1}$ in a sentence $W$ conditioned on the word history $W_{1:l} := (W_1, \dots, W_{l})$ and context $C$. Here, we use $W_i \in V$ to denote a discrete random variable distributed across a fixed vocabulary $V$. In this work, the vocabulary is constructed on sub-word level \citep{bpe-sennrich-etal-2016-neural}. 

Given a training set $D$, maximum likelihood estimation (MLE) has been the most popular framework to train an autoregressive LM \citep{tomas10rnn}. MLE training minimizes the negative log-likelihood (NLL) objective below:
\begin{align}
\small
\label{eq:mleobj}
\begin{split}
 L_{\text{MLE}} = \frac{1}{|D|} \sum_{(W,C) \in D}-\Sigma_{l=0}^{L-1}\log P_{\theta}(W_{l+1}|W_{1:l}, C),
\end{split}
\end{align}
where $\theta$ denotes model parameters, and $P_{\theta}(\cdot\ |\ W_{1:l})$ denotes the conditional model distribution of $W_{l+1}$ given a prefix $W_{1:l}$. For simplicity, we assume all sentences are of length $L$ in the formulations. Since this work focuses on sampling from a given model instead of training it, in the rest of the paper, we abbreviate $P_\theta(\cdot)$ as $P(\cdot)$ for brevity. 

\subsection{Existing Sampling Algorithms}
\label{sec:sampling_exsiting algorithms}

Given a trained LM and a context $C$, an ancestral sampling algorithm seeks to generate a sequence from $P(W|C)$ by sampling token-by-token from a transformed version of $P(W_{l+1}|W_{1..l},C)$. We now review and formulate three popular sampling algorithms: top-$k$ \citep{fan-etal-2018-hierarchical}, nucleus \citep{Holtzman2020The}, and tempered \citep{ackley1988boltzmann, shortgan18massimo} sampling. 

We view these algorithms as different transformations applied to the distribution $P(W_{l+1}|W_{1..l},C)$. First, we treat the conditional distribution $P(W_{l+1}|W_{1..l},C)$ as a \textit{sorted} vector $\vp$ of length $|V|$. By sorting, we rearrange the elements such that if $i<j \rightarrow p_i>=p_j$.\footnote{The token indexes are also permutated accordingly.} We list the transformations and their intuition below:

\begin{definition}
\label{def_topk}
(\textbf{Top-$k$})  
In top-$k$ sampling, we only sample from the top $K$ tokens:
\begin{equation}
    \hat{p}_i=\frac{p_i \cdot \mathbbm{1}\{i \leq K \}}{\sum^{K}_{j=1}p_j},
\end{equation}
where $\mathbbm{1}$ is the indicator function, and $K$ ($1\leq K \leq |V|$) is the hyperparameter.
\end{definition}

\begin{definition}
\label{def_nucleus}
(\textbf{Nucleus})  
With a hyperparameter $P$ ($0<P\leq 1$), in nucleus sampling, we sample from the top-$P$ mass of $\vp$:
\begin{equation}
\begin{split}
\hat{p}_i &= \frac{p'_i}{\sum^{|V|}_{j=1} p'_j},
\end{split}
\end{equation}
where $p'_i=p_i \cdot \mathbbm{1}\{\sum_{j=1}^{i-1}p_j < P\}$.
\end{definition}

\begin{definition}
\label{def_temp}
(\textbf{Tempered})  
In tempered sampling, the log probabilities are scaled by $\frac{1}{T}$:
\begin{equation}
    \hat{p}_i= \frac{\exp(\log(p_i)/T)}{\sum^{|V|}_{j=1} \exp(\log(p_j)/T)}.
\end{equation}
In this work, we assume $0<T<1$, i.e., the distribution is only made sharper\footnote{One could also use $T>1$, but it does not work well in practice.}.
\end{definition}

We additionally experiment with a combined version of top-$k$ and tempered sampling:
\begin{definition}
\label{def_topk_temp}
(\textbf{Tempered Top-$k$})
We combine the transformation defined by top-$k$ and tempered sampling:
\begin{equation}
\begin{split}
\hat{p}_i = \frac{p'_i}{\sum^{|V|}_{j=1} p'_j},
\end{split}
\end{equation}
where $p'_i = \exp(\log(p_i)/T) \cdot \mathbbm{1}\{i \leq K \} $.
We set $1\leq K \leq |V|$ and $0<T<1$.
\end{definition}

Throughout this work we use $\hat{\vp}$ to denote the normalized version of the transformed distribution.
All algorithms have hyperparameters to control the entropy of the transformed distribution. For example, $K$ in top-$k$ sampling controls the size of the support of the resulting distribution. We will formalize this statement in Property \ref{p_entropy} below.

\section{Properties of Sampling Algorithms}
As we will show in Section \ref{sec:result_exsit_exp} (also Figure \ref{fig:giga_humaneval_existing}), top-$k$, nucleus and tempered sampling perform on par with each other under our evaluation. This key observation makes us question: \textit{What are the core principles underlying the different algorithms that lead to their similar performance?}

To answer this question, in this section, we identify three core properties that are provably shared by the existing sampling algorithms. We then design experiments to validate their importance.

\subsection{Identifying Core Properties}
\label{sec:main_core_property}

By inspecting the transformations listed in Definition \ref{def_topk}, \ref{def_nucleus} and \ref{def_temp}, we extract the following three properties:

\begin{property}
\label{p_entropy}
\textbf{(Entropy Reduction)}: The transformation strictly decrease the entropy of the distribution. Formally, $\mathcal{H}(\hat{\vp})<\mathcal{H}(\vp)$, where $\mathcal{H}(\vp)=-\sum^{|V|}_{i=1} p_i \log p_i$.
\end{property}

\begin{property}
\label{p_order}
\textbf{(Order Preservation)}: The order of the elements in the distribution is preserved. Formally, $ p_i \geq p_j \rightarrow \hat{p}_i \geq \hat{p}_j $.
\end{property}

\begin{property}
\label{p_slope}
\textbf{(Slope Preservation)}: The ``slope'' of the distribution is preserved. Formally, $\forall \hat{p}_i > \hat{p}_j > \hat{p}_k > 0$ (i.e., they are not truncated), we have $\frac{\log p_i-\log p_j}{\log p_j-\log p_k}=\frac{\log\hat{p}_i-\log\hat{p}_j}{\log\hat{p}_j-\log\hat{p}_k}$.
\end{property}

The order preservation property implies that truncation can only happen in the tail of the distribution, which aligns with top-$k$ and nucleus sampling. The slope preservation property is stronger than the order preservation property in that not only the ordering, but also the relative magnitude of the elements in the distribution needs to be somewhat preserved by the transformation. 

All these three properties are shared by the three existing sampling algorithms:
\begin{proposition}
\label{prop_propertyholds}
Property \ref{p_entropy}, \ref{p_order} and \ref{p_slope} hold for the top-$k$, nucleus and tempered sampling transformations formulated in Definitions \ref{def_topk}, \ref{def_nucleus} and \ref{def_temp}.
\end{proposition}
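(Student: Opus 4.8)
The plan is to verify each of the three properties separately for each of the three transformations, so the argument naturally splits into a $3 \times 3$ grid of small checks, though several of them can be handled in a uniform way. First I would dispatch Property \ref{p_order} (order preservation), which is essentially immediate from the definitions: each transformation has the form $\hat{p}_i = g(p_i) \cdot m_i / Z$ for a common normalizer $Z>0$, where for top-$k$ and nucleus $g$ is the identity and $m_i \in \{0,1\}$ is a mask, and for tempered $g(x) = x^{1/T}$ with $m_i \equiv 1$. Since $g$ is monotonically increasing on $(0,1]$ and the masks are ``prefix'' masks (if $i \le j$ and $m_j = 1$ then $m_i = 1$, because $\vp$ is sorted and both the $i \le K$ condition and the cumulative-sum condition $\sum_{j'=1}^{i-1} p_{j'} < P$ are monotone in $i$), the relation $p_i \ge p_j \Rightarrow \hat{p}_i \ge \hat{p}_j$ follows. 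The only subtlety to spell out is the tie case $p_i = p_j$ with the mask separating them, which cannot occur precisely because the mask respects the sorted order.

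Next I would handle Property \ref{p_slope} (slope preservation), restricted as stated to indices with $\hat{p}_i, \hat{p}_j, \hat{p}_k > 0$, i.e., untruncated entries. For top-$k$ and nucleus, on the untruncated support we have $\hat{p}_i = p_i / Z$, so $\log \hat{p}_i = \log p_i - \log Z$; the additive constant $-\log Z$ cancels in both the numerator and denominator of the slope ratio, giving equality. For tempered sampling, $\log \hat{p}_i = \tfrac{1}{T}\log p_i - \log Z_T$, which is an affine function $a \log p_i + b$ with $a = 1/T \ne 0$; again the affine transformation preserves the ratio of consecutive differences, since $\frac{(a\log p_i + b) - (a\log p_j + b)}{(a\log p_j + b) - (a\log p_k + b)} = \frac{\log p_i - \log p_j}{\log p_j - \log p_k}$. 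So slope preservation reduces to the elementary fact that ratios of differences are invariant under affine reparametrization.

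The main obstacle is Property \ref{p_entropy} (strict entropy reduction), which requires a genuine (if short) argument in each case and also a nondegeneracy hypothesis — one must assume the starting distribution $\vp$ is not already degenerate in a way that makes the transformation act trivially (e.g., for top-$k$ one needs $\vp$ to have more than $K$ nonzero entries, or more precisely some mass strictly beyond the top $K$; otherwise $\hat{\vp} = \vp$ and the inequality is not strict). For top-$k$ and nucleus, the transformation is ``truncate the tail and renormalize''; I would argue that conditioning a distribution on a strict subset of its support strictly decreases entropy, e.g., by writing $\mathcal{H}(\vp)$ via the chain rule as $\mathcal{H}(\text{Bernoulli}(1-\beta)) + (1-\beta)\mathcal{H}(\hat{\vp}) + \beta \mathcal{H}(\vp_{\text{tail}})$ where $\beta>0$ is the truncated mass, and noting each of the first and third terms is strictly positive when truncation is nontrivial, so $\mathcal{H}(\hat{\vp}) \le \mathcal{H}(\vp)/(1-\beta) - (\text{positive}) < \mathcal{H}(\vp)$ — actually more cleanly, $\mathcal{H}(\vp) = H_b(\beta) + (1-\beta)\mathcal{H}(\hat\vp) + \beta\mathcal{H}(\vp_{\mathrm{tail}}) \ge (1-\beta)\mathcal{H}(\hat\vp) + H_b(\beta)$, and combined with $\mathcal{H}(\hat\vp) \le \log|V|$ this still needs care, so the tidiest route is the grouping/data-processing inequality for the coarsening ``in top set vs. not,'' which gives $\mathcal{H}(\hat\vp) < \mathcal{H}(\vp)$ directly whenever $0 < \beta < 1$. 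For tempered sampling with $0 < T < 1$, I would show the map $T \mapsto \mathcal{H}(\hat{\vp}(T))$ is strictly increasing on $(0,1]$ by differentiating: $\frac{d}{dT}\mathcal{H}(\hat\vp(T)) = \frac{1}{T^2}\Var_{\hat\vp(T)}(\log p)$, which is strictly positive unless $\log p$ is constant $\hat\vp(T)$-almost surely, i.e., unless $\vp$ is uniform on its support; hence $\mathcal{H}(\hat\vp(T)) < \mathcal{H}(\hat\vp(1)) = \mathcal{H}(\vp)$ for $T<1$. Throughout, I would state explicitly the mild nondegeneracy assumptions on $\vp$ (not a point mass; support larger than the truncation threshold; not uniform on its support) under which the entropy decrease is strict, since without them the claim is false as literally stated.
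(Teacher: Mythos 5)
Most of your $3\times 3$ grid goes through and matches the paper's proof in substance: order preservation via a monotone $g$ composed with a prefix mask, slope preservation via invariance of difference ratios under the affine map $\log p_i \mapsto \tfrac{1}{T}\log p_i - \log Z$ (the paper does exactly this cancellation of $\log Z$ and of the $1/T$ factors), and entropy reduction for tempered sampling via the sign of the derivative of $\mathcal{H}$ in the temperature (the paper parametrizes by $\alpha = 1/T$ and obtains $\partial\mathcal{H}/\partial\alpha = -\alpha\,\Var_{p^{\alpha}}[-\log p] \le 0$, which is the same computation). Your explicit flagging of the nondegeneracy conditions needed for \emph{strict} inequality (support larger than $K$, $\vp$ not uniform on its support) is a point where you are more careful than the paper, which leaves these implicit.

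The genuine gap is in entropy reduction for top-$k$ and nucleus. The claim that ``conditioning a distribution on a strict subset of its support strictly decreases entropy'' is false: for $\vp = (0.9, 0.05, 0.05)$, conditioning on the last two coordinates raises the entropy from about $0.39$ nats to $\log 2 \approx 0.69$ nats. Accordingly, the grouping identity $\mathcal{H}(\vp) = H_{\mathrm{b}}(\beta) + (1-\beta)\mathcal{H}(\hat{\vp}) + \beta\,\mathcal{H}(\vp_{\mathrm{tail}})$ does \emph{not} give $\mathcal{H}(\hat{\vp}) < \mathcal{H}(\vp)$ ``directly'' for $0<\beta<1$; what you need is $H_{\mathrm{b}}(\beta) + \beta\,\mathcal{H}(\vp_{\mathrm{tail}}) > \beta\,\mathcal{H}(\hat{\vp})$, and neither the grouping identity nor data processing supplies this for an arbitrary two-set coarsening. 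The missing ingredient is precisely that the retained set consists of the \emph{largest}-probability tokens, and it must enter the inequality somewhere. Your route can be repaired, e.g.\ by setting $q = \max_{\mathrm{tail}} p_i \le \min_{\mathrm{kept}} p_i$ and using $\mathcal{H}(\hat{\vp}) \le \log(1/\hat{p}_{\min}) \le \log\bigl((1-\beta)/q\bigr)$ together with $\mathcal{H}(\vp_{\mathrm{tail}}) \ge \log(\beta/q)$, after which the required inequality reduces to $-\log(1-\beta)/\beta > 0$. The paper avoids the issue differently: it removes the single smallest-probability token, expands $-\mathcal{H}(\vp)$ explicitly, bounds the cross term using $p_i \ge p_{|V|}$ for all retained $i$, and then inducts on the number of removed tokens. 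Either way, as written your step would fail, because the ordering of kept versus discarded mass never actually gets used in your argument.
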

\begin{proof}
See Appendix \ref{app_proof}.
\end{proof}

We then set out to validate the importance of these identified properties in the aspects of \textit{necessity} and \textit{sufficiency}. To do so, we design two sets of new sampling algorithms in which each algorithm either violates one of the identified properties, or satisfies all properties. We list them in the next section.

\subsection{Designed Sampling Algorithms}
\label{sec:sampling_deisnged}

\paragraph{Property-violating algorithms}
To validate the necessity of each property, we design several sampling algorithms which \textit{violate at least one of the identified properties}. In our experiments, we check whether that violation leads to a significant degradation in performance. We list them below:

\begin{definition}
\label{def_targetentropy}
\textbf{(Target Entropy)} 
Based on tempered sampling, target entropy sampling tunes the temperature $t$ such that the transformed distribution has entropy value equal to the hyperparameter $E$ ($0 < E \leq \log|V| $). We formulate it below:
\begin{equation}
    \hat{p}_i= \frac{\exp(\log(p_i)/t)}{\sum^{|V|}_{j=1} \exp(\log(p_j)/t)},
\end{equation}
where $t$ is selected such that $H(\hat{\vp})=E$.
\end{definition}
Target entropy sampling violates entropy reduction, because when $H(\vp) < E$, the entropy will be tuned up (i.e., $H(\hat{\vp})>H(\vp)$).

\begin{definition}
\textbf{(Random Mask)}
In random mask sampling, we randomly mask out tokens in the distribution with rate $R$. We formluate it below:
\begin{equation}
\begin{split}
 \hat{p}_i &= \frac{p'_i}{\sum^{|V|}_{j=1}p'_j},
\end{split}
\end{equation}
where $p'_i = p_i \cdot \mathbbm{1}\{i = 1~\text{or}~u_i > R \}$ and $u_i \sim U(0,1)$. The hyperparameter $R$ ($0 < R \leq 1$) controls the size of the support of the resulting distribution. In Appendix \ref{app_auxiliaryplots}, we show it is crucial that the token which is assigned the largest probability ($p_1$) is never be masked. 
\end{definition}

Random mask sampling is different from top-$k$ or nucleus sampling in that the masking not only happens in the tail of the distribution. Therefore, it violates the order preservation property.

\begin{definition}
(\textbf{Noised Top-$k$}) We add a \textit{sorted} noise distribution to the result from top-$K$ transformation, and the weight of the noise distribution is controlled by a hyperparameter $W$ ($0 \leq W \leq 1$). We formulate it below:
\begin{equation}
\hat{\vp} = (1-W) \hat{\vp}^\text{top-K} + W \vp^\text{noise-K},
\end{equation}
where $\vp^\text{noise-K}$ is a uniformly sampled \textit{sorted K-simplex}, which satisfies $\sum^K_{i=1} p_i^\text{noise-K}=1$ and $i<j \rightarrow p_i^\text{noise-K} \geq p_j^\text{noise-K} \geq 0$.
\end{definition}
The sorted nature of the noise distribution $\vp^\text{noise-K}$ maintains order preservation. However, it violates slope preservation, and the noise weight $W$ controls the degree of the violation.

\paragraph{Property-satisfying algorithms}
To validate the sufficiency of the identified properties, we design two new sampling algorithms for which \textit{all three properties hold}. And in our experiments we check whether their performance is on par with the existing sampling algorithms. We list them below:

\begin{definition}
(\textbf{Random Top-$k$})  
We design a randomized version of top-$k$ sampling: At each time step, we sample a uniformly random float number $u \sim U(0,1)$, and use it to specify a top-$k$ truncation:
\begin{equation}
\begin{split}
        \hat{p}_i=\frac{p_i \cdot \mathbbm{1}\{i \leq k \}}{\sum^{k}_{j=1}p_j},
\end{split}
\end{equation}
where $k=\left \lfloor 1+M \cdot u \right \rfloor$.
 The hyperparameter $M$ ($1 \leq M < |V|$) controls the maximum truncation threshold.
\end{definition}

\begin{definition}
(\textbf{Max Entropy})  
Max entropy sampling is similar to target entropy sampling (Definition \ref{def_targetentropy}). However to match entropy reduction (Property \ref{p_entropy}), we only tune the temperature when $\mathcal{H}(\vp) > E$, where $E$ is the hyperparameter ($0 < E \leq \log|V| $):
\begin{equation}
    \hat{p}_i=\begin{cases} 
        \frac{\exp(\log(p_i)/t)}{\sum^{|V|}_{j=1}\exp(\log(p_j)/t)}, & \text{if}~\mathcal{H}(\vp) > E \\
        p_i, & \text{otherwise}
        \end{cases}, \\
\end{equation}
where $t$ is selected so that $\mathcal{H}(\hat{\vp})=E$.
\end{definition}

It is easy to prove that Property \ref{p_entropy}, \ref{p_order}, and \ref{p_slope} holds for the transformations defined by random top-$k$ and max entropy sampling, and we omit the proof for brevity.

\section{Experiment Setup}
In this section, we first establish evaluation protocols, and then describe the model and data we use for the open-ended language generation task.

\subsection{Evaluation via the Q-D Trade-off}
\label{sec:setup_evaluation}
How to efficiently measure the generation performance of a NLG model has been an important open question. Most existing metrics either measure the \textit{quality} aspect (e.g. BLEU score) or the \textit{diversity} (e.g. n-gram entropy) aspect. To make the situation more complicated, each sampling algorithm has its own hyperparameters which controls the trade-off between quality and diversity.

To address the challenges above, we adopt the quality-diversity trade-off proposed by \citet{shortgan18massimo}. In the Q-D trade-off, we perform a fine-grained sweep of hyperparameters for each sampling algorithm, and compute the quality and diversity score for each configuration. We report two pairs of Q/D metrics, with one pair using automatic evaluation and the other using human evaluation. In the next two sections, we describe the metrics we use, and refer readers to \citet{shortgan18massimo} for more intuition behind the Q-D trade-off.

\subsubsection{Automatic Evaluation}
\label{sec:automatic_eval_bleu}
For automatic metrics, we adopt the corpus-BLEU \citep{yu2016seqgan} metric to measure quality and the self-BLEU \citep{zhu2018texygen} metric to measure diversity. We formulate them below.

Given a batch of generated sentences $S_\text{gen}$ and a batch of sentences from ground-truth data as references $S_\text{ref}$, corpus-BLEU returns the average BLEU score \citep{papineni-etal-2002-bleu} of every model generated sentence against the reference set:
\begin{equation}
\small
\begin{split}
    \text{corpus-BLEU}(S_\text{gen}, S_\text{ref}) = \frac{1}{|S_\text{gen}|} \sum_{W\in S_\text{gen}} \text{BLEU} & (W, S_\text{ref}).
\end{split}
\end{equation}
A higher corpus-BLEU score means that the generated sequences has better quality in that it has higher ngram-level overlap with the reference data. Based on the same intuition, we define the self-BLEU metric to quantify the diversity aspect:
\begin{equation}
\small
    \text{self-BLEU}(S_\text{gen}) = \text{corpus-BLEU}(S_\text{gen}, S_\text{gen}),
\end{equation}
where a lower self-BLEU score means that the samples have better diversity.

In our experiments, we feed the first ten subwords of every sample from test set to the model, and compare the model-generated sequences to the reference samples in the validation set. We use 10,000 samples to compute corpus-BLEU or self-BLEU, i.e., $|S_\text{gen}|=|S_\text{ref}|=10,000$.

Automatic evaluation enables us to do a fine-grained sweep of the hyperparameters for each sampling algorithm, and compare them in the quality-diversity trade-off. However, observations from automatic evaluation could be misaligned with human evaluation \citep{belz-reiter-2006-comparing}. Therefore, we confirm our key observations with human evaluation. 

\subsubsection{Human Evaluation}
\paragraph{Quality} We ask a pool of 602 crowdworkers on Amazon Mechanical Turk to evaluate various sampling configurations in the quality aspect. Each worker is presented a set of ten samples along with the prompts (prefixes). They are then asked to rate how likely the sentence would appear in a news article between 0 and 5 (Invalid, Confusing, Unspecific, Average, Expected, and Very Expected respectively). 

We focus on the Gigaword dataset for human evaluation since news articles are ubiquitous and do not often require expert knowledge for quality judgement. For each configuration (sampling algorithm and hyperparameter pair) we ask crowdworkers to rate 200 samples in total. To get an accurate rating for each sample, we enlist 25 different crowdworkers to rate each sample. We report mean and standard deviation from 5 independent runs (each with 40 samples) as error bar.

By manual inspection, we find that the time spent in the annotations is a good indicator of the quality of the rating. Therefore, we estimate the human judgement score for a sample as the average rating of the 20 crowdworkers (out of 25) who took the most time to rate the samples. We provide further details about our setup in Appendix \ref{appendix:mturk_details} and \ref{appendix:convergence}.

\paragraph{Diversity}
It is difficult for human annotators to estimate diversity of text \cite{huse19tatsunori}. Therefore, we use the \textit{n-gram entropy} metric \citep{ yizhe18aim,negtrain19he} . Given $S_\text{gen}$ which contains a large number of samples, we measure its diversity using the following formulation:
\begin{equation}
    \mathcal{H}^\text{$n$-gram}(S_\text{gen}) = \sum_{g \in G_n} - r(g) \log r(g),
\end{equation}
where $G_n$ is the set of all n-grams that appeared in $S_\text{gen}$, and $r(g)$ refers to the ratio (frequency) of n-gram $g$ w.r.t. all n-grams in the $S_\text{gen}$. For the estimation of n-gram entropy, we generate 50,000 samples from each sampling configuration.

We will report human quality score either paired with n-gram entropy or with self-BLEU as diversity metric. We find they give similar observations. %

\subsection{Model and Datasets}
\label{sec:setup_model_data}
We separately fine-tune GPT2-small \cite{radford18gpt2, Wolf2019HuggingFacesTS} (110M parameters) on the Gigaword \citep{graff2003english, napoles-gigaword} and the Wikitext-103 \citep{DBLP:conf/iclr/MerityX0S17} datasets. We use the same tokenization as GPT-2, and add additional padding and end-of-sequence tokens (\texttt{[EOS]}) to the sentences.

To generate a sequence, we feed a length-10 prefix from test data into the fine-tuned GPT-2 model, and use a sampling algorithm to complete the sentence. Since shorter samples are more difficult to judge in quality \citep{ippolito-etal-2020-automatic}, we filter all generated sentence completions to be between 40 and 50 subwords, and filter our validation and test set to meet the same requirements. %
To permit validation and test sets that are large enough to prefix 10,000 sentences for the corpus-BLEU metric, we re-chunk the first 80\% of the Gigaword dataset for the training set, 15\% for validation, and the last 5\% for the test set. %
Similarly, we re-chunk the first 97\% of the Wikitext-103 dataset for training, and leave 1.5\% for validation and 1.5\% for test.

\section{Empirical Results}
First, we compare existing sampling algorithms, and then move on to validate the necessity and sufficiency of the identified properties.

\subsection{Comparison of Existing Algorithms}
\label{sec:result_exsit_exp}

\begin{figure}
    \centering
    \includegraphics[width=\columnwidth]{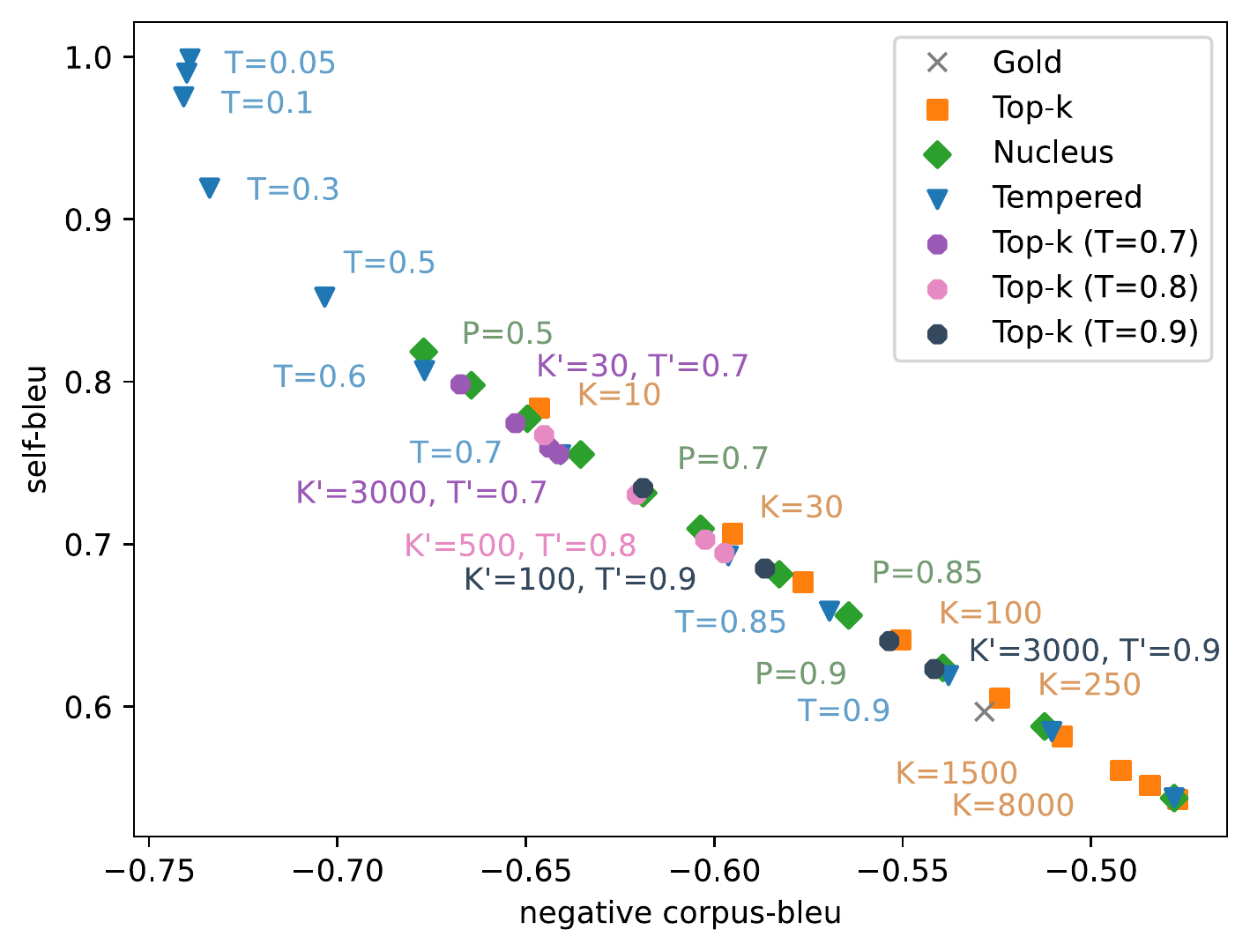}
    \caption{The performance (x-axis: quality, y-axis: diversity, both are the smaller the better) of top-$k$, nucleus, tempered and tempered top-$k$ sampling are on par on the Gigaword dataset, as shown by automatic evaluation.}
    \label{fig:giga_bleu_existing}
\end{figure}

We compare top-$k$, nucleus, and tempered sampling via automatic and human evaluation. We do a fine-grained sweep of hyperparameters for each sampling algorithm on the Gigaword dataset. The results are shown in Figure \ref{fig:giga_humaneval_existing} (human evaluation) and Figure \ref{fig:giga_bleu_existing} (automatic evaluation). 
We also show the quality and diversity score for human text in the test data for reference, which is labeled as gold.

Both automatic and human evaluations demonstrate that the performance of top-$k$, nucleus and tempered sampling are on par with each other, with no significant gap. When the hyperparameters ($K$, $P$ and $T$) are tuned so that different sampling has the same diversity (measured by self-BLEU or n-gram entropy), their quality (measured by corpus-BLEU or human rating) are close. 

Additionally, we compare tempered top-$k$ sampling with the existing algorithm also in Figure \ref{fig:giga_bleu_existing}. We find that adding the tempered transformation only moves top-$k$ sampling along the Q-D trade-off, instead of yielding a better or a worse sampling algorithm. For example, the performance of the $K=500, T=0.8$ configuration for tempered top-$k$ sampling is very close to the $K=30$ configuration for the top-$k$ sampling. 

\begin{figure}
    \centering
    \includegraphics[width=\columnwidth]{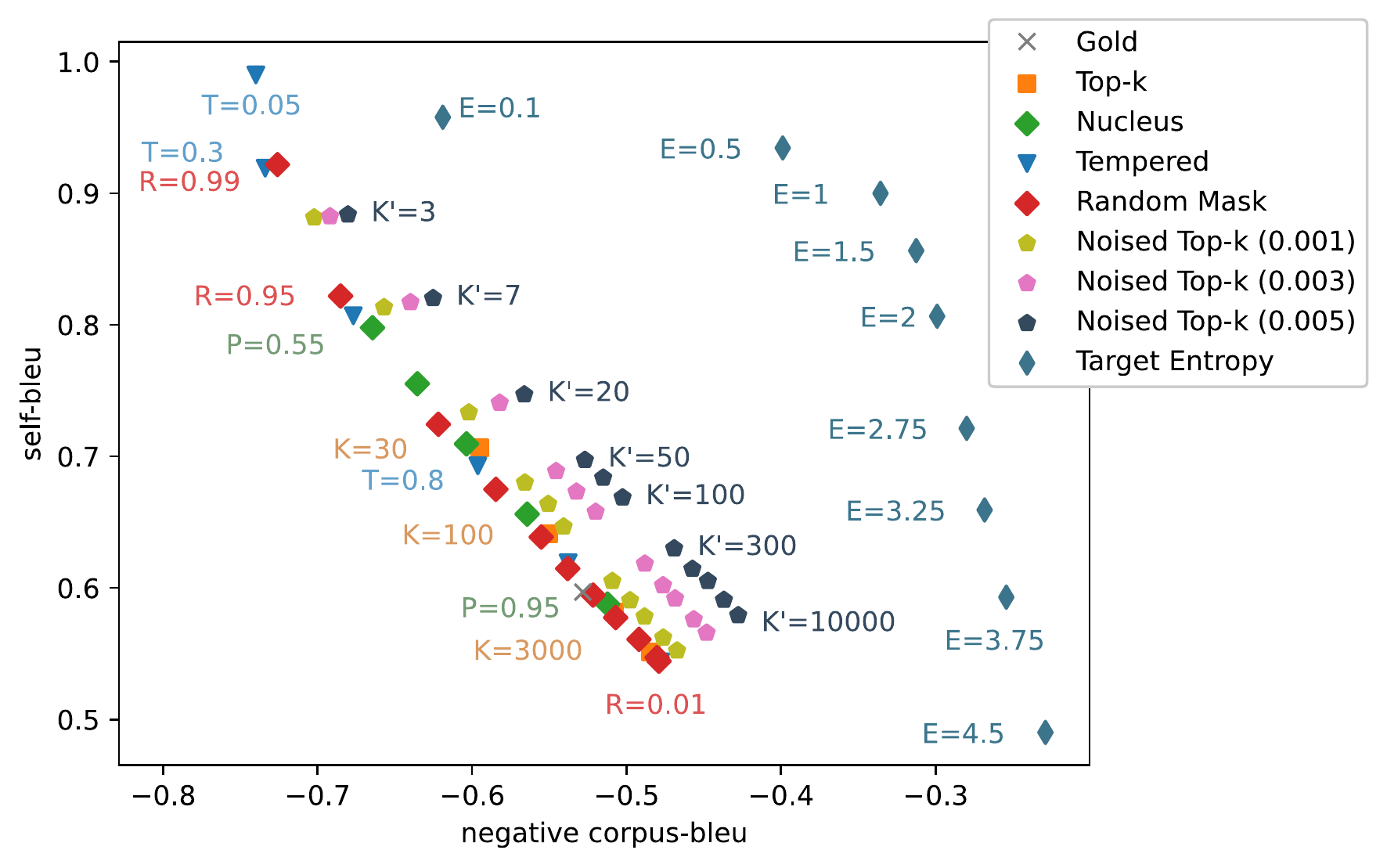}
    \caption{Automatic evaluation of the noised top-$k$, target entropy, and random mask sampling proposed to validate the necessity of the identified properties. The results show that violation of entropy reduction and slope preservation could lead to drastic performance degradation, while the order preservation property could be further relaxed.}
    \label{fig:giga_sufficiency_bleu}
\end{figure}

Motivated by these observations, we identify three core properties (elaborated in Section \ref{sec:main_core_property}) that are shared among the sampling algorithms: \textit{entropy reduction}, \textit{order preservation} and \textit{slope preservation}. In the following two sections, we present experiments validating the necessity or sufficiency aspect of the properties.

\subsection{Property-violating Algorithms}

In Figure \ref{fig:giga_sufficiency_bleu}, we compare the generation performance of the  property-violating sampling algorithms (designed in Section \ref{sec:sampling_deisnged}), against the existing algorithms using automatic evaluation on the Gigaword dataset. We make the following observations: First, the target entropy sampling, which violates entropy reduction, has significantly worse performance; Second, even with small noise weight $W$, the performance of noised top-$k$ sampling degrades from the original top-$k$ sampling, and the gap becomes larger as $W$ increases; Last, the random mask sampling is on par with the existing sampling algorithms in performance. We further confirm this observation with human evaluation in Figure \ref{fig:giga_humaneval_newsampling}. 

\begin{figure}[t]
    \centering
    \includegraphics[width=\columnwidth]{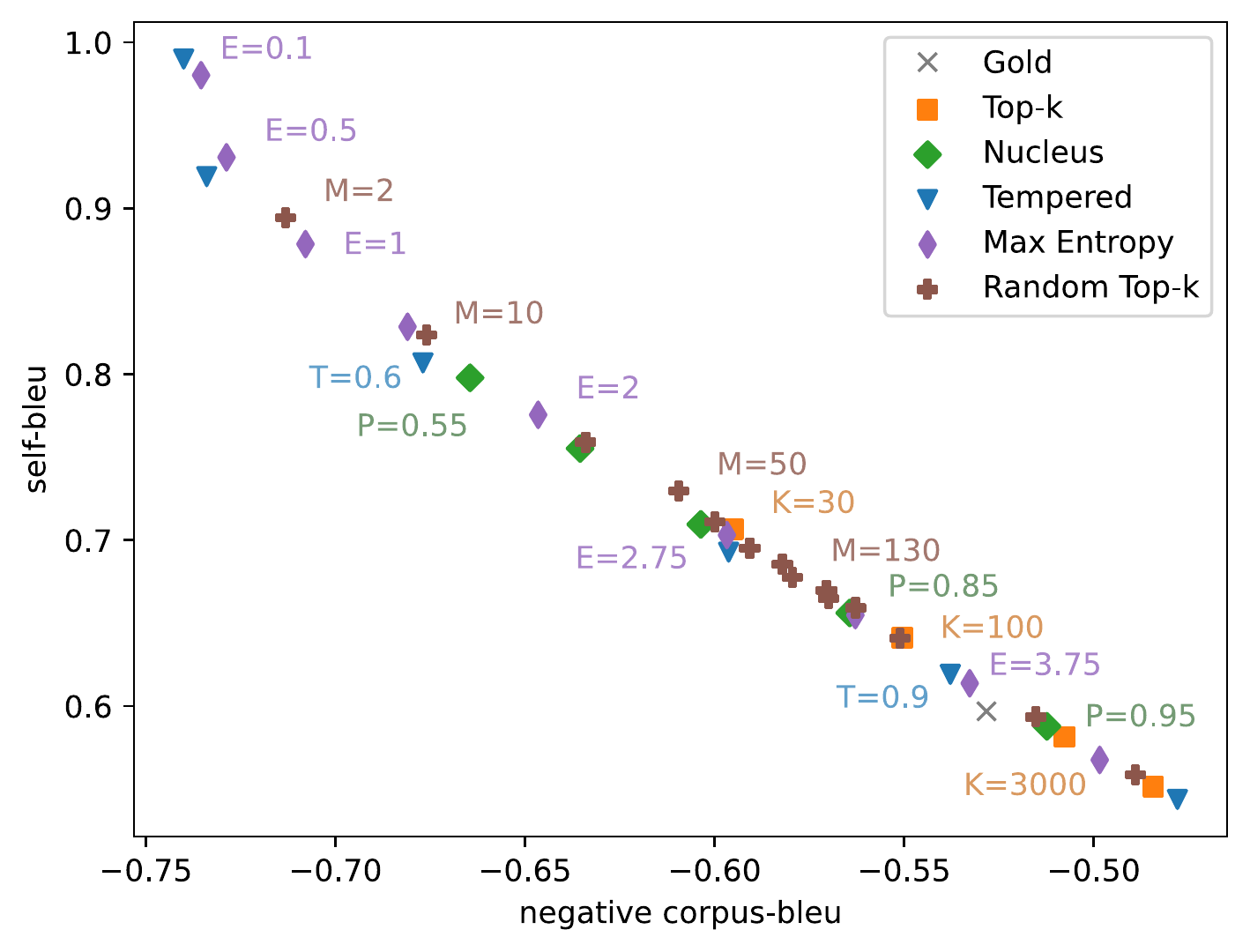}
    \caption{The proposed random top-$k$ and max entropy schedulers, which meet the identified properties, are on par in performance with existing methods in automatic evaluation on the Gigaword dataset.}
    \label{fig:giga_humaneval_newsampling}
\end{figure}

These results suggest that the violation of entropy reduction or slope preservation could lead to drastic performance degradation. On the other hand, the competitive performance of random mask sampling suggests that order preservation could be further relaxed.

In the next section, we investigate the sufficiency aspect of the identified properties.

\subsection{Property-satisfying Algorithms}

\begin{figure}[h]
    \centering
    \includegraphics[width=\columnwidth]{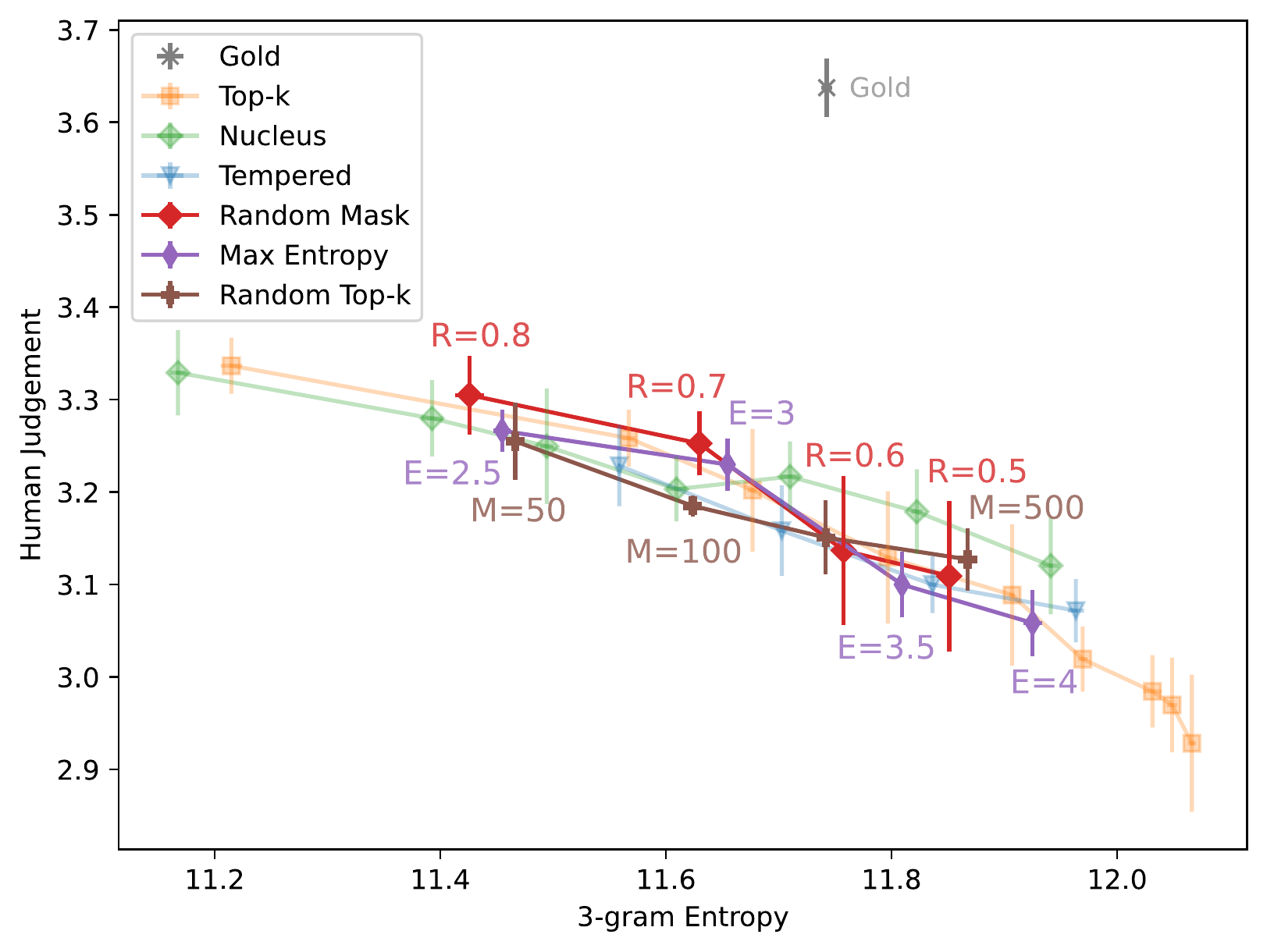}
    \caption{Human evaluation also shows that the proposed sampling algorithms has performance on par with the existing methods on the Gigaword dataset. Appendix \ref{sec:ngram_entropy} repeats this plot with self-BLEU.}
    \label{fig:giga_humaneval_newsampling}
\end{figure}

\definecolor{emphasize}{rgb}{1.0, 0.25, 0.25}
\begin{table*}[]
\centering
\small
\aboverulesep=0pt
\belowrulesep=0pt
\renewcommand{\arraystretch}{1.2}
\begin{tabular}{p{2.1cm}p{13cm}}
\toprule
 \multicolumn{1}{c}{\textbf{Sampling}} &
 \multicolumn{1}{c}{\textbf{Conditional Samples}} \\ \midrule
 \multicolumn{2}{c}{\textbf{Existing Sampling Algorithms}} \\ \hline
\cellcolor{lightblue}\textit{Top-$k$ \newline (K = 30)} & 
    \cellcolor{lightblue}\textit{steven spielberg’s dreamworks movie studio} said monday it was filing a lawsuit, accusing us studio executives of defrauding hundreds of thousands of dollars in refunds and other damages. \\
\textit{Nucleus \newline (P = 0.80)} &
    \textit{steven spielberg's dreamworks movie studio} has failed to attract the kind of business and development investors that jeffrey hutchinson dreamed up in the past. \\
 \cellcolor{lightblue}\textit{Tempered \newline (T = 0.85)} &
  \cellcolor{lightblue}\textit{steven spielberg's dreamworks movie studio} plans to spend the rest of the year producing the high-speed thriller "the earth's path"  and an upcoming sequel, the studio announced on wednesday. \\ \hline
  \multicolumn{2}{c}{\textbf{Property-satisfying Sampling Algorithms}} \\ \hline
  
 \textit{Random Top-$k$ \newline (R = 90)} &
    \textit{steven spielberg’s dreamworks movie studio} is planning to make a movie about a young man who is a $<$unk$>$, a man who has a dream of being the first man to be born with the ability to walk on water. \\
 \cellcolor{lightblue}\textit{Max Entropy \newline (E = 2.75)} &
  \cellcolor{lightblue}\textit{steven spielberg's dreamworks movie studio} has agreed to pay \$ \#.\# million to director john nichols (£ \#.\# million, \#\#\#, a record in the studio circulation ), the studio announced sunday.. \\ \hline
  \multicolumn{2}{c}{\textbf{Property-violating Sampling Algorithms}} \\ \hline
  
\textit{Random Mask \newline (R = 0.75)} &
  \textit{steven spielberg's dreamworks movie studio} scored a big win with a \$ \#\#.\# million ( euro \#\#.\# million ) direct-to-video ( dvds ) deal to develop the \#\#\#\# short story "the rose garden". \\
 \cellcolor{lightblue}\textit{Noised Top-$k$ \newline (K=50, W=5e-3)} &
  \cellcolor{lightblue}\textit{steven spielberg's dreamworks movie studio} is in disarray and has a few directors and a lot of stock involved, leaving it only a matter of time before \textcolor{emphasize}{spielberg's departure from the nobel peace prize}. \\ 
 \textit{Target Entropy \newline (E = 2.75)} &
  \textit{steven spielberg's dreamworks movie studio} production scored an action \textcolor{emphasize}{boost m boom, nabbing an 'd} after the \#\#th instal specialization with nominations of fritz, ika, ivan english ape and evlyn mcready. \\ \bottomrule
\end{tabular}
\caption{Generated sequences with the same prefix \textit{steven spielberg’s dreamworks movie studio} by different sampling algorithms. The hyperparameters are chosen such that the algorithms yield roughly the same diversity measured by self-BLEU. The poor-quality spans are higlighted in \textcolor{emphasize}{red}.}
\label{tab:generated-samples-competitive}
\end{table*}

We now compare the generation performance of the property-satisfying sampling algorithms (designed in Section \ref{sec:sampling_deisnged}) with the existing sampling algorithms. The results from the Gigaword dataset are shown in Figure \ref{fig:giga_sufficiency_bleu} (for automatic evaluation) and Figure \ref{fig:giga_humaneval_newsampling} (for human evaluation). For completeness, we also replicate Figure \ref{fig:giga_humaneval_newsampling} with self-BLEU as the diversity measure in Appendix \ref{sec:ngram_entropy}. We also present results from automatic evaluation on the Wikitext-103 dataset in Figure \ref{fig:wiki_bleu_competitive}, with consistent observations. 

The evaluations consistently show that the performance of random top-$k$ and max entropy sampling (and random mask sampling in last section) is on par with top-$k$, nucleus, and tempered sampling. These results strengthen the importance of the identified properties in that, new sampling algorithms could get competitive generation performance as long as they meet the identified properties.

\begin{figure}[h!]
    \centering
    \includegraphics[width=\columnwidth]{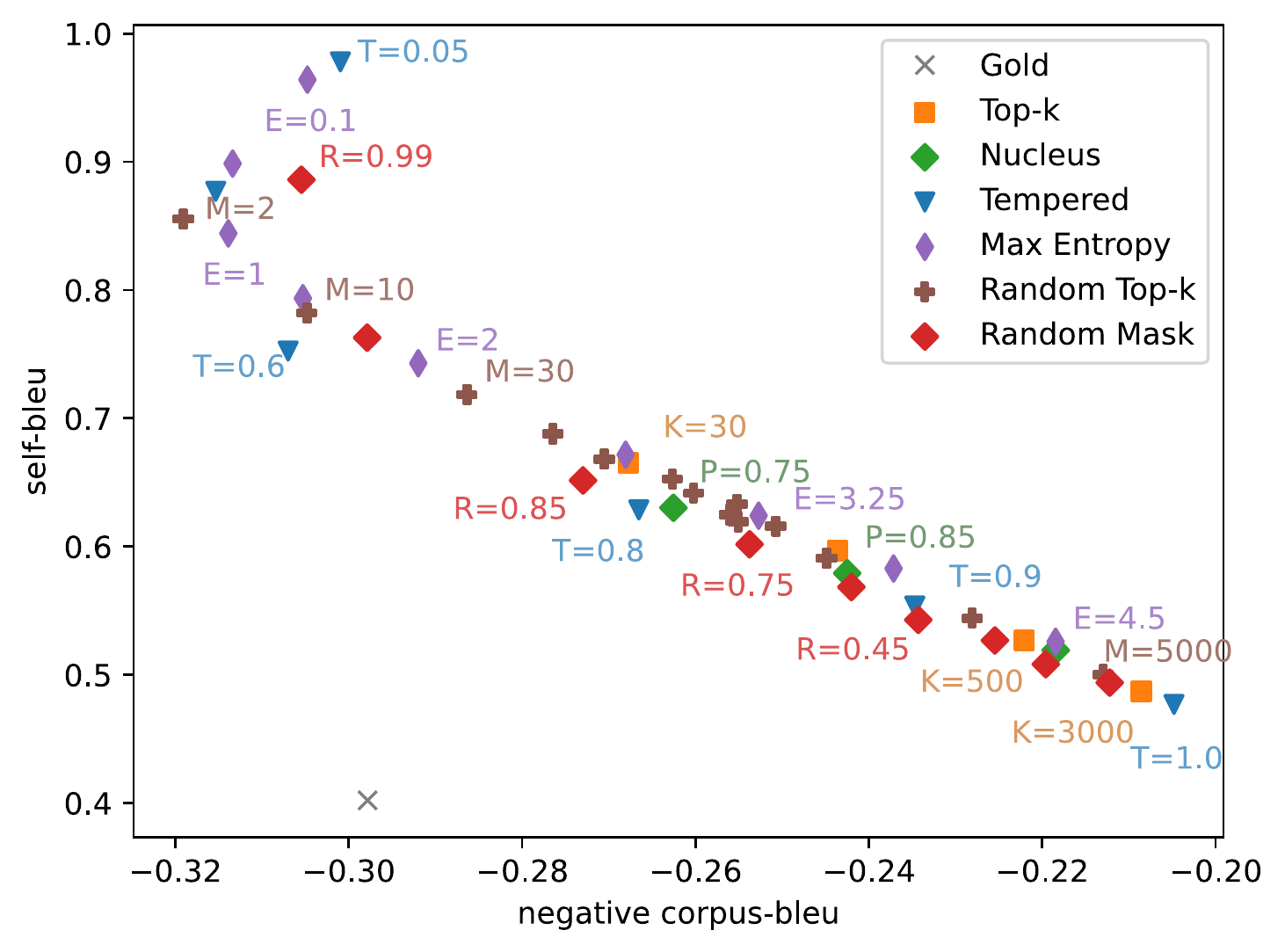}
    \caption{Automatic evaluation on the Wikitext-103 dataset: The performance of proposed sampling algorithms are on par with top-$k$, nucleus, and tempered sampling.}
    \label{fig:wiki_bleu_competitive}
\end{figure}

\subsection{Qualitative Analysis}
We list samples from the proposed sampling algorithms and compare them with the existing ones in Table \ref{tab:generated-samples-competitive}. We choose the hyperparameter of each sampling algorithm so that each algorithm exhibits a similar level of diversity (as measured by self-BLEU). By manual inspection, we find that the quality of samples from property-satisfying sampling algorithms is on par with samples from the existing algorithms. In particular, the samples from random top-$k$, max entropy, and random masked sampling are all coherent and informative. 

In contrast, the samples from noised top-$k$ and target entropy algorithms, tend to be less semantically and syntatically coherent. In particular, the target entropy sampling algorithm, which obtains the lowest quality score measured by corpus-BLEU, lacks basic language structure. In comparison to target entropy, noised top-$k$ is syntatically coherent, but exhibits logical and factual inconsistencies. These observations aligns with the results we get from automatic evaluation.

\section{Related Works}
\label{sec:related}

Despite the popularity of sampling algorithms in natural language generation, a rigorous comparison or scrutiny of existing algorithms is lacking in the literature. \citet{Holtzman2020The} proposes nucleus sampling, and compare it with top-$k$ sampling \citep{fan-etal-2018-hierarchical}. However, only a few hyperparameter configurations are tested. In \citet{huse19tatsunori} and \citet{shortgan18massimo}, temperature sampling is used and the hyperparameter $T$ is tuned to trade-off between diversity and quality, but it lacks comparisons with other sampling algorithms. \citet{welleck2020consistency} studies the \textit{consistency} of existing sampling and decoding algorithms, without comparing the generation performance.

In this work we mainly use the quality-diversity trade-off \citep{shortgan18massimo} to conduct a comparison of different sampling algorithms. Parallel to our work, \citet{zhang2020trading} also uses the quality-diversity trade-off to compare top-$k$, nucleus, and tempered sampling. Their observation is similar to ours: The performance of the existing algorithms are close with no significant gap.

More importantly, the underlying reasons for the success of various sampling algorithms remain poorly understood. \citet{zhang2020trading} proposes the \textit{selective} sampling algorithm, which fails to outperform existing approaches. This failed attempt suggests the need for a better understanding of the strengths and weaknesses of existing methods. To the best of our knowledge, our work provides the first systematic characterization of sampling algorithms, where we attribute the success of existing sampling algorithms to a shared set of properties. We show that we can propose novel sampling algorithms based on the identified properties, and reach competitive generation performance as measured by both automatic and human evaluation.

\section{Limitations and Future Work}
Our core contribution is the three properties of sampling algorithms that we conjecture are crucial for competitive generation performance. While we design a set of experiments to validate their necessity and sufficiency, the observations we make are still  empirical. We emphasize that \textbf{it is completely possible that there exists some crucial property, that is yet to be discovered, and can lead to significantly better generation performance}. Therefore, the exploration of novel sampling algorithms \citep{zhang2020trading} should still be encouraged.

On the other hand, to provide a comprehensive study, we focus on the open-ended language generation task with the GPT-2 model. As future work, it would be interesting to check whether our observations also hold on other tasks such story generation or dialogue response generation, or with weaker language models in low-resource setting.

\section{Conclusion}
This work studies sampling algorithms for the open-ended language generation task. We show that the existing algorithms, namely top-$k$, nucleus, and tempered sampling, have similar generation performance as measured by the quality-diversity trade-off evaluation. Motivated by this result, we identify three key properties that we prove are shared by the existing algorithms. To validate the importance of these identified properties, we design a set of new sampling algorithms, and compare their performance with the existing sampling algorithms. We find that violation of the identified properties may lead to drastic performance degradation. On the other hand, we propose several novel algorithms, namely random top-$k$ and max entropy sampling, that meet the identified properties. We find that their generation performance is on par with the existing algorithms. 

\section*{Acknowledgments}
The authors sincerely thank Yixin Tao, Jingzhao Zhang and Yonatan Belinkov for useful discussions. This work was partly supported by Samsung Advanced Institute of Technology (Next Generation Deep Learning: from pattern recognition to AI), Samsung Electronics (Improving Deep Learning using Latent Structure). Kyunghyun Cho thanks CIFAR, Naver, eBay, NVIDIA and Google for their support.

\bibliography{anthology,emnlp2020}
\bibliographystyle{acl_natbib}

\afterpage{\null\newpage}
\newpage
\clearpage
\appendix

\section{Auxiliary Plots}
\label{app_auxiliaryplots}

We show the importance of preserving the token with the largest probability ($p_1$) in the proposed random mask sampling. For comparison, we relax the constraint and define the \textit{random mask-all} sampling:
\begin{definition}
(\textbf{Random Mask-all})  
The only difference between random mask-all sampling and random mask sampling is that we allow the $p_1$ token to be masked. We formulate it below:
\begin{equation}
    \hat{p}_i = \frac{p'_i}{\sum^{|V|}_{j=1} p'_j},
\end{equation}
where $p'_i=p_i \cdot \mathbbm{1}\{u_i > R \}$ and $u_i \sim U(0,1)$. 
\end{definition}

In Figure \ref{fig:giga_randomspace_whetherpreserve}, we show that if $p_1$ is allowed to be masked, the generation performance will be seriously degraded.
\begin{figure} [h]
    \centering
    \includegraphics[width=\columnwidth]{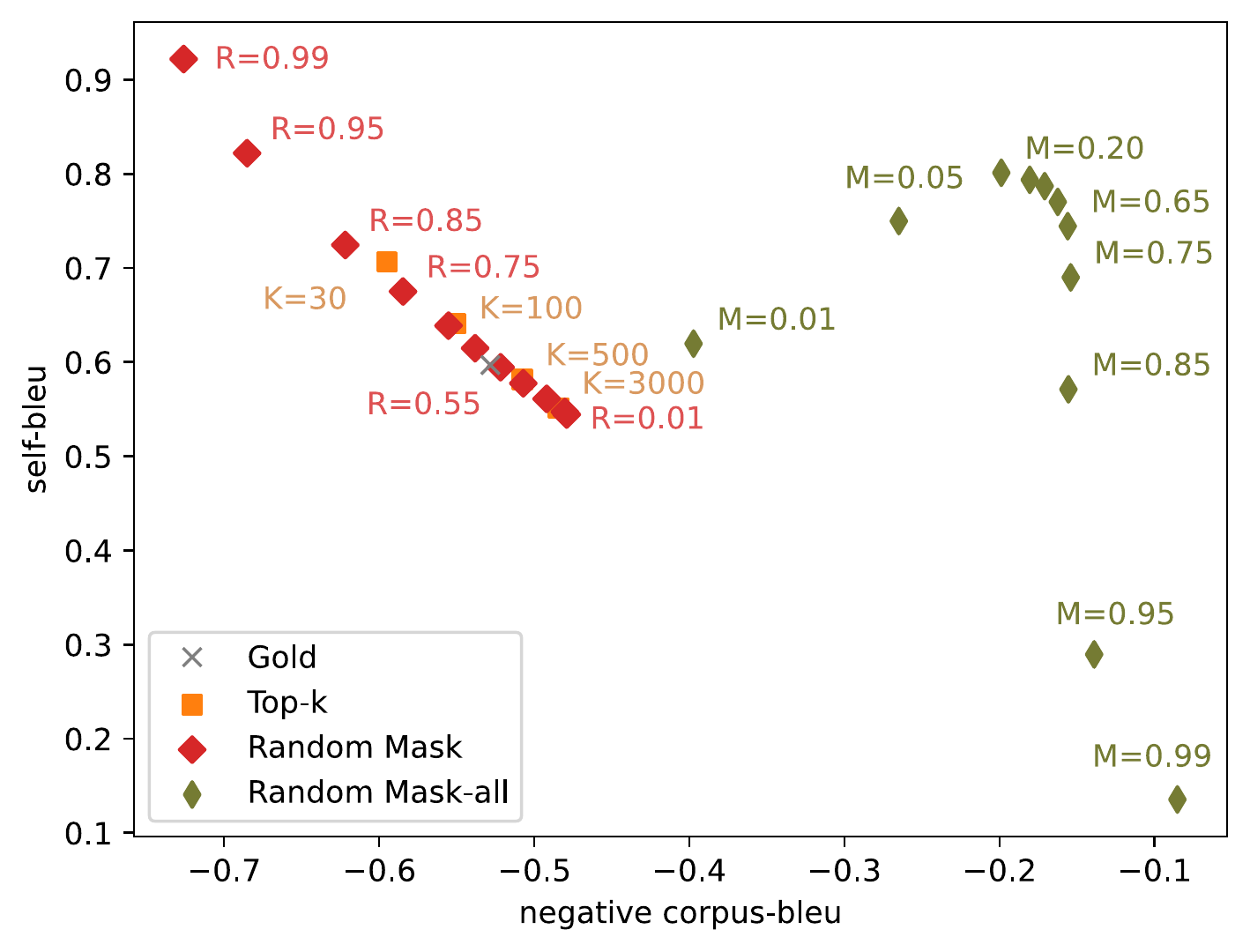}
    \caption{The random mask-all sampling, where $p_1$ is allowed to be masked, is shown to have worse performance than the random mask sampling. The dataset is Giagword.}
    \label{fig:giga_randomspace_whetherpreserve}
\end{figure}

\section{Proof for Proposition \ref{prop_propertyholds}}
\label{app_proof}
In this section we prove Proposition \ref{prop_propertyholds}. 

Firstly, it is straightforward to prove that Property \ref{p_order} (order preservation) holds for the top-$k$, nucleus and tempered sampling and we omit the proof here.

For Property \ref{p_slope} (slope preservation), it holds trivially for nucleus and top-$k$ sampling. We prove it for tempered sampling in the following lemma:
\begin{lemma}
Property \ref{p_slope} holds for tempered sampling (Definition \ref{def_temp}).
\end{lemma}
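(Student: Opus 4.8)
The plan is to compute the transformed log-probabilities explicitly and verify that the slope ratio is invariant. Recall from Definition~\ref{def_temp} that tempered sampling sets $\hat{p}_i = \exp(\log(p_i)/T)/Z$, where $Z = \sum_{j=1}^{|V|} \exp(\log(p_j)/T)$ is the normalizing constant. Taking logarithms, I would first write $\log \hat{p}_i = \frac{1}{T}\log p_i - \log Z$. The key observation is that this is an affine transformation of $\log p_i$: it has the form $\log \hat{p}_i = a \log p_i + b$ with $a = 1/T > 0$ and $b = -\log Z$ independent of $i$.

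Next, for any three untruncated indices with $\hat{p}_i > \hat{p}_j > \hat{p}_k > 0$ (which, since tempered sampling never truncates, is simply any $i,j,k$ with $p_i > p_j > p_k > 0$), I would form the difference quotient on the right-hand side of Property~\ref{p_slope}:
\begin{equation}
\frac{\log\hat{p}_i - \log\hat{p}_j}{\log\hat{p}_j - \log\hat{p}_k} = \frac{(a\log p_i + b) - (a\log p_j + b)}{(a\log p_j + b) - (a\log p_k + b)} = \frac{a(\log p_i - \log p_j)}{a(\log p_j - \log p_k)}.
\end{equation}
The additive constant $b$ cancels in both numerator and denominator, and then the multiplicative constant $a$ cancels between numerator and denominator, leaving exactly $\frac{\log p_i - \log p_j}{\log p_j - \log p_k}$, which is the left-hand side of Property~\ref{p_slope}. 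One small point worth checking is that the denominator is nonzero: since $p_j > p_k > 0$ we have $\log p_j > \log p_k$, so $\log p_j - \log p_k \neq 0$, and likewise $a \neq 0$ since $T$ is finite and positive; hence all the cancellations are legitimate.

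I do not anticipate a genuine obstacle here — the result is essentially immediate once one recognizes that the temperature transformation acts affinely on log-probabilities. The only thing to be slightly careful about is the bookkeeping on which indices count as "untruncated": for tempered sampling the support is never reduced, so the condition $\hat{p}_i > \hat{p}_j > \hat{p}_k > 0$ is equivalent to $p_i > p_j > p_k > 0$ by order preservation (Property~\ref{p_order}, already established), and in particular all three quantities appearing in the log-difference ratios are well-defined and the denominators are strictly positive. That remark, plus the two-line cancellation computation above, completes the proof.
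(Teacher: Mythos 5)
Your proof is correct and follows essentially the same route as the paper's: both reduce to observing that $\log \hat{p}_i = \tfrac{1}{T}\log p_i - \log Z$, so the additive $\log Z$ term cancels in each difference and the multiplicative $\tfrac{1}{T}$ cancels between numerator and denominator. Your framing of this as an affine map on log-probabilities, together with the remarks on nonzero denominators and the absence of truncation, is a slightly more explicit write-up of the identical computation.
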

\begin{proof}
Remember that the tempered sampling with hyperparameter $T$ defines the follow transformation: $\hat{p_i}=\frac{p'_i}{\sum_j p'_j}$, where $p'_i=\exp(\log(p_i)/T)$ . 
We set $Z=\sum_j p'_j$, then $\forall \hat{p}_i > \hat{p}_j > \hat{p}_k > 0$ we have
\begin{equation}
\small
\begin{split}
& \frac{\log\hat{p}_i-\log\hat{p}_j}{\log\hat{p}_j-\log\hat{p}_k} \\
= &\frac{\log p'_i - \log Z -\log p'_j + \log Z}{\log p'_j - \log Z -\log p'_k + \log Z} \\
= &\frac{\log p'_i -\log p'_j}{\log p'_j -\log p'_k} ~ \text{($\log Z$ is cancelled)} \\
= &\frac{\log(p_i)/T -\log(p_j)/T}{\log(p_j)/T -\log(p_k)/T} \\
= & \frac{\log(p_i) -\log(p_j)}{\log(p_j) -\log(p_k)} 
\end{split}
\end{equation}
\end{proof}

Only Property \ref{p_entropy} (entropy reduction) is left. We now prove it holds for top-$k$ / nucleus sampling:
\begin{lemma}
Property \ref{p_entropy} holds for transformations defined by top-$k$ or nucleus sampling (Definition \ref{def_topk} and \ref{def_nucleus}).
\end{lemma}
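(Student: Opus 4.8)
The plan is to show that truncating a probability distribution to a subset of its support and renormalizing strictly increases, or at least does not decrease, each surviving entry, and then use this monotonicity together with the concavity/monotonicity structure of the entropy summands to conclude $\mathcal{H}(\hat{\vp}) < \mathcal{H}(\vp)$. Concretely, both top-$k$ and nucleus sampling have the same shape: there is a ``kept'' index set $A \subsetneq \{1,\dots,|V|\}$ (the top $K$ indices for top-$k$, or the smallest prefix whose cumulative mass first reaches $P$ for nucleus), with $p_1 > 0$, and the transformed distribution is $\hat{p}_i = p_i / Z$ for $i \in A$ and $\hat{p}_i = 0$ otherwise, where $Z = \sum_{j \in A} p_j \in (0,1]$. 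I would first dispose of the trivial case $Z = 1$: this forces $p_i = 0$ for all $i \notin A$, so $\hat{\vp} = \vp$ and there is nothing changing; strictly speaking one then needs the hypothesis that truncation actually removes nonzero mass (otherwise the ``$<$'' is false), so I would note that the interesting and intended case is $Z < 1$, i.e. at least one nonzero entry is discarded, and handle that.

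First I would write $\mathcal{H}(\hat{\vp}) = -\sum_{i \in A} \frac{p_i}{Z}\log\frac{p_i}{Z} = \frac{1}{Z}\left(-\sum_{i\in A} p_i \log p_i\right) + \log Z$, using $\sum_{i \in A} p_i = Z$. Then I would split the original entropy as $\mathcal{H}(\vp) = -\sum_{i \in A} p_i \log p_i - \sum_{i \notin A} p_i \log p_i$. Subtracting, the key quantity is
\begin{equation}
\mathcal{H}(\vp) - \mathcal{H}(\hat{\vp}) = \left(1 - \tfrac{1}{Z}\right)\Big(-\sum_{i\in A} p_i\log p_i\Big) - \log Z - \sum_{i\notin A} p_i\log p_i.
\end{equation}
Writing $1 - \frac1Z = -\frac{1-Z}{Z}$ and noting $-\sum_{i\notin A} p_i \log p_i \ge 0$ with $-\log Z > 0$, I need to show the whole thing is positive. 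A cleaner route that avoids sign bookkeeping is to invoke that $\hat{\vp}$ restricted to $A$ is exactly the conditional distribution of $\vp$ given the event $\{i \in A\}$, and use the standard decomposition of entropy under conditioning, or equivalently Jensen/Gibbs' inequality: $\mathcal{H}(\vp) = \mathcal{H}(B) + \Pr(B=1)\,\mathcal{H}(\vp \mid i \in A) + \Pr(B=0)\,\mathcal{H}(\vp \mid i \notin A)$, where $B = \mathbbm{1}\{i \in A\}$. Since $Z < 1$ we have $\mathcal{H}(B) > 0$, and both conditional entropies are $\ge 0$, while $\mathcal{H}(\vp \mid i \in A) = \mathcal{H}(\hat{\vp})$; hence $\mathcal{H}(\vp) \ge \mathcal{H}(B) + Z\,\mathcal{H}(\hat{\vp})$. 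To finish I also need $\mathcal{H}(\hat{\vp}) \le \mathcal{H}(\vp)$ would not be enough by itself, so I would instead argue directly: $\mathcal{H}(\vp) - \mathcal{H}(\hat{\vp}) \ge \mathcal{H}(B) + (Z-1)\mathcal{H}(\hat{\vp}) \ge \mathcal{H}(B) - (1-Z)\log|V|$, which is not obviously positive — so the conditioning identity alone is too lossy.

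Therefore I expect the main obstacle to be getting a genuinely strict inequality rather than just $\le$, and I would handle it by the explicit computation above rather than the soft conditioning argument. Returning to the displayed difference, group terms as
\begin{equation}
\mathcal{H}(\vp) - \mathcal{H}(\hat{\vp}) = \sum_{i \in A} p_i \log\frac{1}{p_i}\Big(1 - \tfrac1Z\Big) + \sum_{i\notin A} p_i\log\frac{1}{p_i} - \log Z,
\end{equation}
and instead rewrite $\hat{p}_i = p_i/Z \ge p_i$ for $i\in A$ (since $Z \le 1$), so $\log\hat p_i \ge \log p_i$, and compare summand by summand: for $i \in A$, $-\hat p_i\log\hat p_i$ versus $-p_i\log p_i$ is not monotone in general, so the per-term comparison fails and the global argument is needed. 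The clean finish is: by Gibbs' inequality applied to $\hat{\vp}$ against the ``prior'' $\vp$ renormalized, $\sum_{i\in A}\hat p_i \log\frac{\hat p_i}{p_i} = \log\frac1Z = -\log Z \ge 0$, i.e. $\sum_{i\in A}\hat p_i\log\hat p_i = \sum_{i\in A}\hat p_i\log p_i - \log Z$; multiply the identity $\mathcal{H}(\hat{\vp}) = -\frac1Z\sum_{i\in A}p_i\log p_i + \log Z$ by $Z$ and subtract from $Z\,\mathcal{H}(\vp) = -Z\sum_{i\in A}p_i\log p_i - Z\sum_{i\notin A}p_i\log p_i$ to obtain $Z(\mathcal{H}(\vp) - \mathcal{H}(\hat{\vp})) = (1-Z)\big(-\sum_{i\in A}p_i\log p_i\big) - Z\sum_{i\notin A}p_i\log p_i - Z\log Z$. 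Every term on the right is nonnegative ($-\log Z > 0$ strictly since $Z<1$, and $-\log p_i \ge 0$ for probabilities, with the tail sum strictly positive because some discarded $p_i > 0$), giving $\mathcal{H}(\vp) > \mathcal{H}(\hat{\vp})$; I would present exactly this chain of equalities, since it is short and makes the strictness transparent.
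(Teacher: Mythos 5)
There is a genuine gap: the final identity on which your strictness argument rests has a sign error, and the corrected version does not have all terms of one sign. Subtracting $Z\,\mathcal{H}(\hat{\vp}) = -\sum_{i\in A}p_i\log p_i + Z\log Z$ from $Z\,\mathcal{H}(\vp) = -Z\sum_{i\in A}p_i\log p_i - Z\sum_{i\notin A}p_i\log p_i$ gives
\begin{equation}
Z\bigl(\mathcal{H}(\vp)-\mathcal{H}(\hat{\vp})\bigr) \;=\; -(1-Z)\Bigl(-\sum_{i\in A}p_i\log p_i\Bigr) \;-\; Z\sum_{i\notin A}p_i\log p_i \;-\; Z\log Z,
\end{equation}
i.e.\ the first term is $-(1-Z)$ times the (nonnegative) partial entropy of the kept block, not $+(1-Z)$ times it as you wrote. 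A quick numerical check exposes this: for $\vp=(\tfrac12,\tfrac12)$ with $A=\{1\}$, the left side is $\tfrac12\log 2$ while your right side evaluates to $\log 2$. With the correct sign, the first term is $\leq 0$ and can in general dominate, so ``every term on the right is nonnegative'' is false and the chain does not establish $\mathcal{H}(\hat{\vp})<\mathcal{H}(\vp)$.

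The deeper problem is structural: nowhere in your final argument do you use that $A$ consists of the \emph{largest} entries of the sorted $\vp$, yet the lemma is false without that. For $\vp=(0.9,0.05,0.05)$ and $A=\{2,3\}$, truncation and renormalization yields $\hat{\vp}=(\tfrac12,\tfrac12)$ with entropy $\log 2\approx 0.69$, strictly larger than $\mathcal{H}(\vp)\approx 0.39$. So any correct proof must exploit the ordering, as the paper's does: it removes only the single smallest token at a time and uses the bound $\log p_i \geq \log p_{|V|}$ for the surviving tokens (so that $-\sum_{i<|V|}\tfrac{p_i}{1-p_{|V|}}\log p_i \leq -\log p_{|V|}$), which is exactly where the discarded term gets controlled; induction then handles general $K$ or $P$. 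Your earlier observation about the degenerate case $Z=1$ (strictness fails when only zero-mass tokens are discarded) is a fair and worthwhile caveat, but the main argument needs to be repaired along the lines above before it proves anything.
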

\begin{proof}
We first consider the change of entropy when the token with the smallest probability ($p_{|V|}$) is removed from the original distribution ($\hat{p}_i=\frac{p_i}{\sum^{|V|-1}_{j=1}p_i}, 1 \leq i < |V|$):
\begin{equation}
\small
\begin{split}
& -\mathcal{H}(\vp) = \sum_{i=1}^V p_i \log p_i  \\
    &= \sum_{i=1}^{V-1} p_i \log p_i + p_{|V|} \log p_{|V|} \\
    &= (1- p_{|V|}) \sum_{i=1}^{V-1} \frac{p_i}{1-p_{|V|}}\log p_i + p_{|V|} \log p_{|V|}
\\
&= \sum_{i=1}^{V-1} \frac{p_i}{1-p_{|V|}}\log \frac{p_i}{1-p_{|V|}} + \underbrace{\log (1-p_{|V|})}_{< 0} \\ & ~~~~ + p_{|V|} \left(\log p_{|V|} - \sum_{i=1}^{V-1} \frac{p_i}{1-p_{|V|}} \log p_i\right) \\
&< \sum_{i=1}^{V-1} \hat{p}_i \log \hat{p}_i + p_{|V|} \left(\log p_{|V|} - \sum_{i=1}^{V-1} \frac{p_i}{1-p_{|V|}} \log \underbrace{p_i}_{> p_{|V|}}\right)
\\
&< \sum_{i=1}^{V-1} \hat{p}_i \log \hat{p}_i + p_{|V|} \left(\log p_{|V|} - \underbrace{\sum_{i=1}^{V-1} \frac{p_i}{1-p_{|V|}} \log p_{|V|}}_{=\log p_{|V|}}\right)
\\
&= \sum_{i=1}^{V-1} \hat{p}_i \log \hat{p}_i = -\mathcal{H}(\hat{\vp}) 
\end{split}
\end{equation}
Therefore, we get $\mathcal{H}(\hat{\vp}) < \mathcal{H}(\vp)$. 

By induction (iteratively removing the last token), it is now easy to see that the top-$k$ or nucleus transformation strictly decrease the entropy of the sampling distribution.
\end{proof}

Finally, we prove Property \ref{p_entropy} (entropy reduction) holds for tempered sampling:

\begin{lemma}
Property \ref{p_entropy} holds for the transformation defined by tempered sampling (Definition \ref{def_temp}).
\end{lemma}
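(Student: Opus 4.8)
The plan is to show that the tempered transformation with $0 < T < 1$ strictly decreases entropy by exhibiting the entropy of the tempered distribution as a differentiable function of an auxiliary parameter and analyzing its monotonicity. Concretely, let $\vp$ be the original sorted distribution (assume it is not already a point mass, otherwise the statement is vacuous and we may exclude it), and for $\beta \geq 1$ define $p_i(\beta) = \frac{p_i^\beta}{Z(\beta)}$ with $Z(\beta) = \sum_{j} p_j^\beta$; setting $\beta = 1/T$ recovers tempered sampling, and $\beta = 1$ recovers $\vp$. I would then define $f(\beta) = \mathcal{H}(\vp(\beta))$ and show $f'(\beta) < 0$ for all $\beta \geq 1$, which immediately gives $\mathcal{H}(\hat{\vp}) = f(1/T) < f(1) = \mathcal{H}(\vp)$ since $1/T > 1$.

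First I would compute $f'(\beta)$. Writing $\ell_i = \log p_i$ and using $\mathcal{H}(\vp(\beta)) = \log Z(\beta) - \beta \sum_i p_i(\beta)\,\ell_i$, a direct differentiation (using $\frac{d}{d\beta}\log Z = \sum_i p_i(\beta)\ell_i$ and the fact that $\frac{d}{d\beta} p_i(\beta) = p_i(\beta)(\ell_i - \sum_j p_j(\beta)\ell_j)$) yields, after cancellation,
\begin{equation}
f'(\beta) = -\beta \left( \sum_i p_i(\beta)\,\ell_i^2 - \Big(\sum_i p_i(\beta)\,\ell_i\Big)^2 \right) = -\beta \operatorname{Var}_{\vp(\beta)}(\ell),
\end{equation}
where the variance is taken over the index $i$ drawn according to $\vp(\beta)$. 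Since $\vp$ is not a point mass, the values $\ell_i = \log p_i$ are not all equal on the support, so this variance is strictly positive, and $\beta \geq 1 > 0$; hence $f'(\beta) < 0$. Integrating from $1$ to $1/T$ gives the claim.

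The main obstacle — really the only subtlety — is handling the support carefully: if some $p_i = 0$ then $\log p_i = -\infty$, but such tokens have $p_i(\beta) = 0$ for all $\beta \geq 1$ and contribute nothing to any of the sums above (with the usual convention $0\log 0 = 0$), so the variance computation should be understood as restricted to $\{i : p_i > 0\}$, which is exactly the common support of $\vp$ and $\hat{\vp}$. The only degenerate case is when this support is a singleton, i.e. $\vp$ is already deterministic; then $\mathcal{H}(\vp) = 0$ and no transformation can strictly decrease it, so this case must be (implicitly) excluded — as it is for top-$k$ and nucleus as well. An alternative, fully elementary route avoiding calculus would be to note that $\hat{\vp}$ is obtained from $\vp$ by the two-point "sharpening" operation applied repeatedly and use a majorization/Schur-concavity argument, but the variance-derivative argument above is cleaner and I would present that.
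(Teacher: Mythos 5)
Your proposal is correct and follows essentially the same route as the paper's own proof: both parametrize the tempered distribution by the inverse temperature, differentiate the entropy, and identify the derivative as $-\beta$ times the variance of the log-probabilities under the tempered distribution. Your treatment is in fact slightly more careful, since you note that strict negativity of the derivative requires the distribution not to be a point mass (the paper writes $\operatorname{Var}\geq 0$ yet concludes a strict inequality), and you handle zero-probability tokens explicitly.
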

\begin{proof}
For convenience, we first rewrite the Temperature transformation:
\begin{equation}
\hat{p}_i=p_i^\alpha = \frac{\exp(-\alpha e_i)}{\sum_j \exp(-\alpha e_j)}
\end{equation}
where $e_i = -\log (p_i)$ and $\alpha = \frac{1}{T}$. 
The entropy can be written as:
\begin{equation}
\small
    \begin{split}
        \mathcal{H}(\vp^\alpha) &= -\sum_i \frac{\exp(-\alpha e_i)}{\sum_j \exp(-\alpha e_j)} \log \frac{\exp(-\alpha e_i)}{\sum_j \exp(-\alpha e_j)} 
\\
&= \log \sum_j \exp(-\alpha e_j) + \alpha \sum_i e_i \frac{\exp(-\alpha e_i)}{\sum_j \exp(-\alpha e_j)}
    \end{split}
\end{equation}
Next, we take derivative w.r.t $\alpha$:
\begin{equation}
\small
    \begin{split}
        &\frac{\partial \mathcal{H}}{\partial \alpha} =\underbrace{-\sum_i e_i \frac{\exp(-\alpha e_i)}{\sum_j \exp(-\alpha e_j)}
+\sum_i e_i \frac{\exp(-\alpha e_i)}{\sum_j \exp(-\alpha e_j)}}_{=0}\\ 
&+\alpha \frac{\partial}{\partial \alpha} \sum_i e_i \frac{\exp(-\alpha e_i)}{\sum_j \exp(-\alpha e_j)}
\\
&= 
\alpha \sum_i e_i \underbrace{\left[\frac{\partial}{\partial \alpha} \log \frac{\exp(-\alpha e_i)}{\sum_j \exp(-\alpha e_j)}\right] \left[\frac{\exp(-\alpha e_i)}{\sum_j \exp(-\alpha e_j)}\right]}_{\text{log-derivative trick}}
\\
&= \alpha \sum_i e_i\left[ -e_i + \sum_{j'} e_{j'} \frac{\exp(-\alpha e_i)}{\sum_j \exp(-\alpha e_j)} \right] \\ 
&~~~~\left[\frac{\exp(-\alpha e_i)}{\sum_j \exp(-\alpha e_j)} \right] \\
&= -\alpha \mathbb{E}_{p^\alpha} \left[ e_i^2 - e_i \mathbb{E}_{p^\alpha}[e_i] \right]
\\
&= -\underbrace{\alpha}_{>0} \underbrace{\left( \mathbb{E}_{p^\alpha}[e_i^2] - \mathbb{E}_{p^\alpha}[e_i]^2 \right)}_{=\text{Var}_{p^\alpha}[e_i] \geq 0}
\\
&< 0
    \end{split}
\end{equation}
We can now easily get $\frac{\partial \mathcal{H}}{\partial T} = \frac{\partial \mathcal{H}}{\partial \alpha} \frac{\partial \alpha}{\partial T} > 0$. 
Therefore, when we apply a tempered transformation with $T<1$, the entropy will strictly decrease comaparing to the original distribution (where $T=1$).
\end{proof}

\section{Mechanical Turk Setup}
\label{appendix:mturk_details}
Our crowdworkers were required to have a HIT acceptance rate higher than 95\%, and be located in the United States. In total, 602 crowdworkers completed our tasks. In order to ensure that we had quality data, we filtered the crowdworker annotations for workers that spent at least 45 seconds on the aggregate task (or 4.5 seconds rating each sentence). 51 crowdworkers were filtered out through this process. Screenshots of our instructions and task are available in Figure(s) \ref{fig:mturk_task} and \ref{fig:mturk_example} respectively.

\begin{figure}[h!]
    \centering
    \includegraphics[width=\columnwidth]{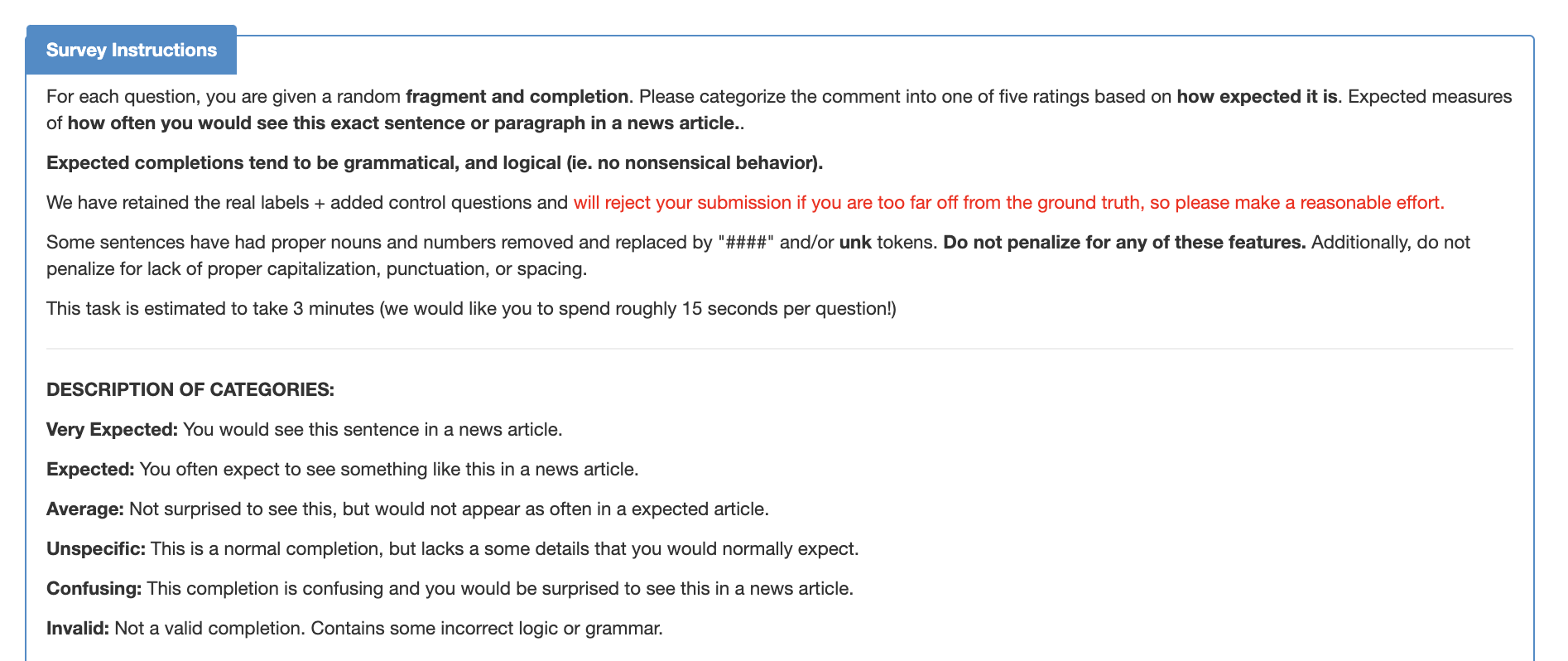}
    \caption{Our instructions for crowdworker task.}
    \label{fig:mturk_task}
\end{figure}

\begin{figure}[h!]
    \centering
    \includegraphics[width=\columnwidth]{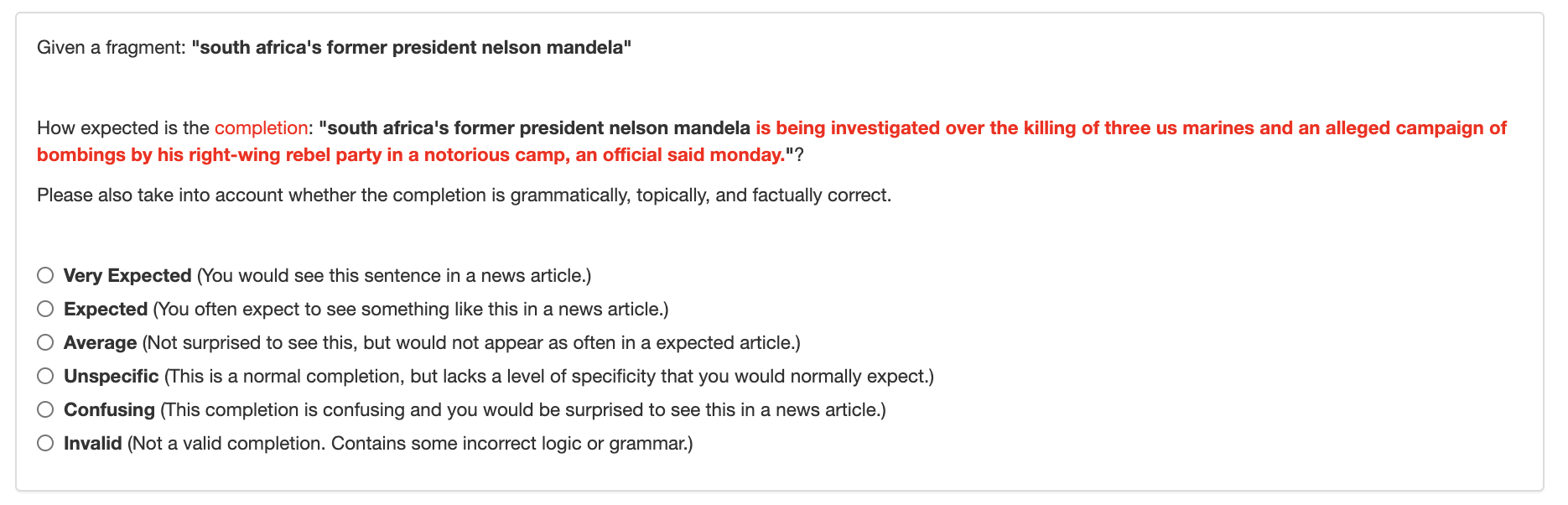}
    \caption{An example of the task given to crowdworkers.}
    \label{fig:mturk_example}
\end{figure}

\section{Convergence of Human Evaluation}
\label{appendix:convergence}
When we conduct human evaluation, we provide crowdworkers with 200 generated samples for some configuration, and ask 25 different crowdworkers to evaluate the same sample. However, a reasonable question is whether our human evaluations are converging to some underlying true rating, or whether we need more samples or replicas. %

Figure \ref{fig:samples_sweep} and \ref{fig:replica_sweep} show that the average scores have roughly converged around 150 samples per configuration, or around 15 replicas per sample. The two figures demonstrate this for nucleus sampling, and this holds true for human evaluations of all sampling algorithms. 

\begin{figure}[h!]
    \centering
    \includegraphics[width=0.9\columnwidth]{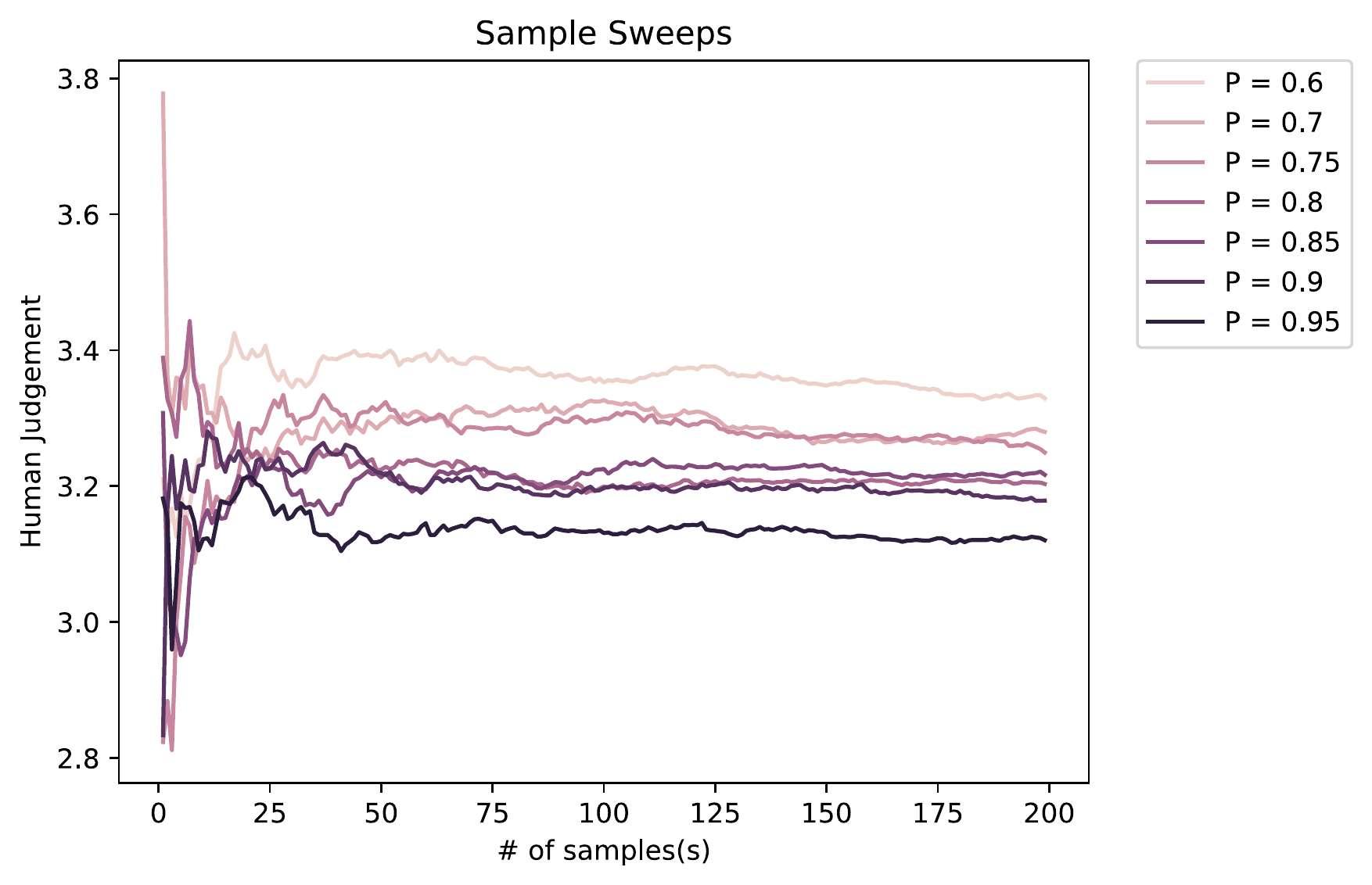}
    \caption{We see that we obtain a reasonable estimate of sample quality around 150 samples per configuration.}
    \label{fig:samples_sweep}
\end{figure}

\begin{figure}[h!]
    \centering
    \includegraphics[width=0.9\columnwidth]{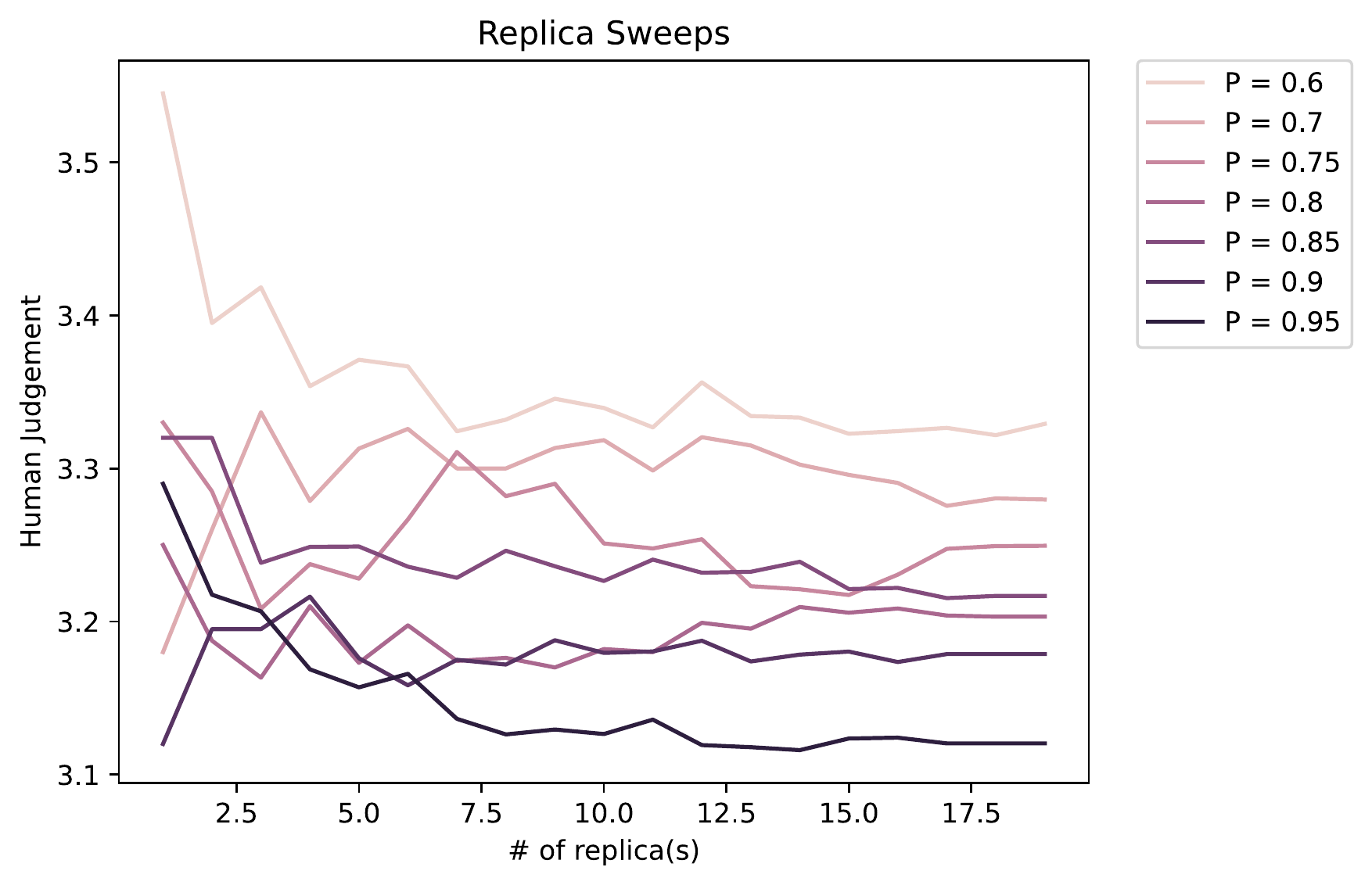}
    \caption{We see that we obtain a reasonable estimate of sample quality with around 15 ratings per sample.}
    \label{fig:replica_sweep}
\end{figure}

\section{Additional Model-Generated Samples}
\begin{table*}[]
\centering
\small
\aboverulesep=0pt
\belowrulesep=0pt
\renewcommand{\arraystretch}{1.2}
\begin{tabular}{p{2.1cm}p{13cm}}
\toprule
\multicolumn{1}{c}{\textbf{Sampling}} &
  \multicolumn{1}{c}{\textbf{Conditional Samples}} \\ \midrule
  \multicolumn{2}{c}{\textbf{Existing Sampling Algorithms}} \\ \hline
\cellcolor{lightblue}\textit{Top-K \newline (K = 15)} & 
    \cellcolor{lightblue}\textit{as the rest of his denver broncos teammates} prepared for the game against denver, jay kasey could not help but think of his teammates and friends who worked hard in preparation for that night's game. \\
\textit{Nucleus \newline (P = 0.65)} &
    \textit{as the rest of his denver broncos teammates} slumped and buried themselves in their work, broncos quarterback leon johnson moved to the locker room monday and called his parents. \\
 \cellcolor{lightblue}\textit{Temperature \newline (T = 0.7)} &
  \cellcolor{lightblue}\textit{as the rest of his denver broncos teammates} gathered in an auditorium to watch more stretching drills, ben holtz gave an emotional speech : we're running out of time to win a championship ring. \\ \hline
  \multicolumn{2}{c}{\textbf{Property-satisfying Sampling Algorithms}} \\ \hline
  
 \textit{Random Top-K \newline (R = 30)} &
    \textit{as the rest of his denver broncos teammates} battled through their own stretch of the nfl playoffs, the quarterback began throwing the ball in the fourth quarter. \\
 \cellcolor{lightblue}\textit{Max Entropy \newline (E = 2.75)} &
  \cellcolor{lightblue}\textit{steven spielberg's dreamworks movie studio} has agreed to pay \$ \#.\# million to director john nichols (£ \#.\# million, \#\#\#, a record in the studio circulation ), the studio announced sunday.. \\ \hline
  \multicolumn{2}{c}{\textbf{Property-violating Sampling Algorithms}} \\ \hline
  
\textit{Random Mask \newline (R = 0.75)} &
  \textit{as the rest of his denver broncos teammates} connect with a player that the team didn't expect to become a starter, quarterback james crosby speaks out about colin peterson's passion for the game. \\
 \cellcolor{lightblue}\textit{Noised Top-K \newline (K=20, W=5e-3)} &
  \cellcolor{lightblue}\textit{as the rest of his denver broncos teammates} start making room for nerdy bundles or twiggy pitchers, coach william perez might have to cut a big, bold note cut ready to \textcolor{emphasize}{console wife join them in iraq}. \\ 
 \textit{Target Entropy \newline (E = 2.5)} &
  \textit{as the rest of his denver broncos teammates} scratched out their locker rooms,  \textcolor{emphasize}{cleanDeath Yo Communities wander edge extingustretched cords429 Mohnegie wildfires}. \\ \bottomrule
\end{tabular}
\caption{The samples conditioned on \textit{as the rest of his denver broncos teammates}, and the hyperparameters for a given sampling algorithm. The poor quality spans are higlighted in \textcolor{emphasize}{red}.}
\label{tab:generated-samples-extra}
\end{table*}

Table \ref{tab:generated-samples-extra} shows some additional samples from each of the sampling algorithms described in the paper. Similarly, we have chosen hyperparameters for each sampling method that yields a similar diversity (measured by self-BLEU) to the top-$k$ configuration where $K=15$. We observe that all sampling algorithms except for noised top-$k$ and target entropy, yield similar quality samples. For noised top-$k$ and target entropy, we see that these samples tend to degenerate towards the end of the sentence, indicating violation of the identified properties may possibly lead towards degraded performance.

\section{Human Evaluation with Self-BLEU as Diversity Metric}
\label{sec:ngram_entropy}
\begin{figure}[h!]
    \centering
    \includegraphics[width=0.95\columnwidth]{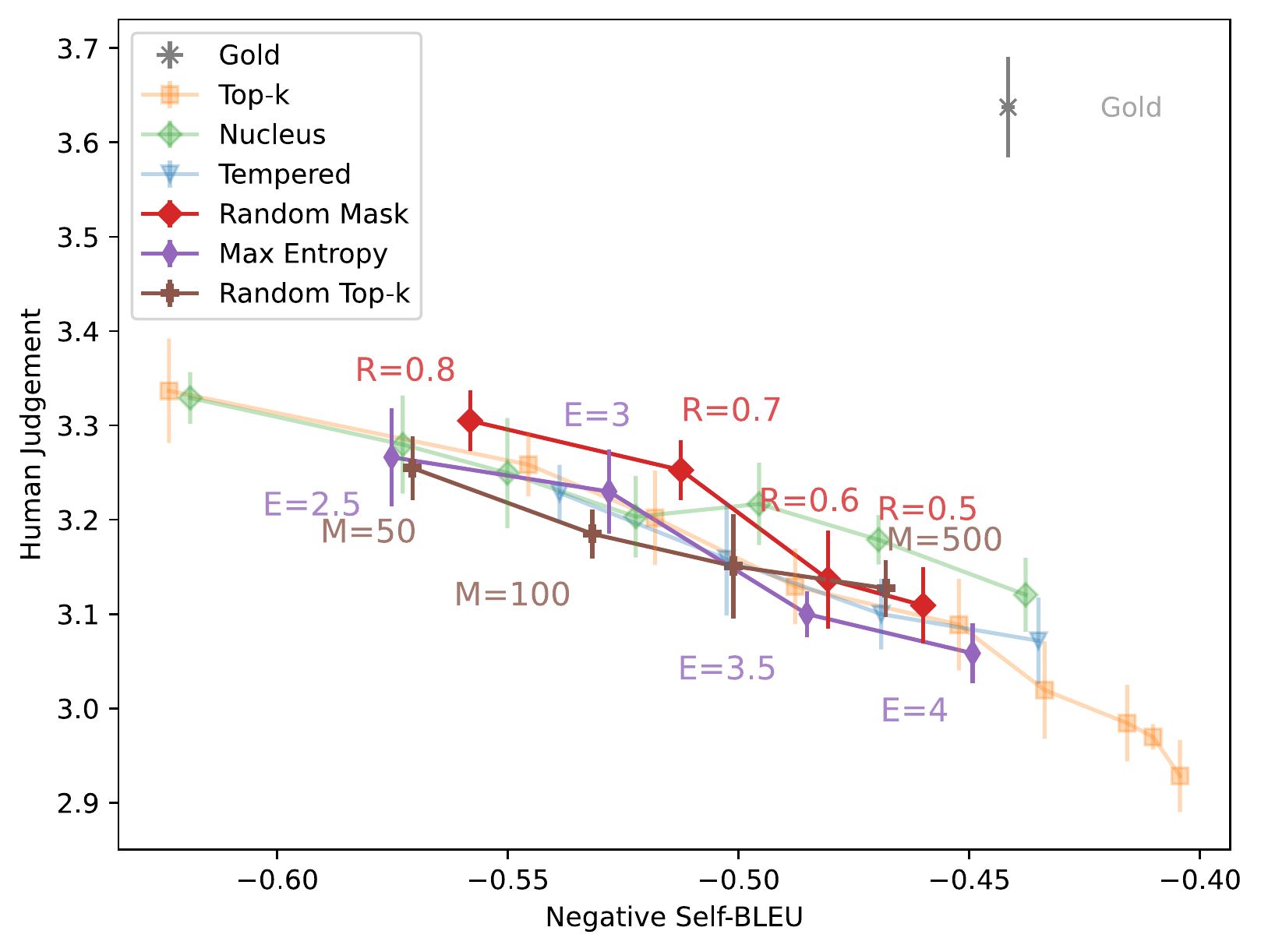}
    \caption{Using self-BLEU as a diversity metric provides similar conclusions as to using n-gram entropy.}
    \label{fig:giga_sbleu}
\end{figure}

Figures \ref{fig:giga_humaneval_existing} and \ref{fig:giga_humaneval_newsampling} measures diversity in terms of 3-gram entropy, while the rest of our work measures diversity in terms of self-BLEU. For completeness, we provide Figure \ref{fig:giga_sbleu} where self-BLEU is used for diversity metric. This figure demonstrates that similar trends can be observed using either 3-gram entropy or self-BLEU.

\end{document}